\newcommand{\cmnist}{CMNIST$^*$}
\definecolor{Gray}{gray}{0.85}
\definecolor{LightCyan}{rgb}{0.7,1,1}
\newcolumntype{a}{>{\columncolor{Gray}}c}
\newcolumntype{b}{>{\columncolor{LightCyan}}c}
\newcommand{\alig}{\text{align}}
\begin{document}

\title{Correct-\textsc{n}-Contrast: A Contrastive Approach for Improving Robustness to Spurious Correlations}
\author{Michael Zhang$^\dagger$\thanks{Corresponding author}, Nimit S. Sohoni$^\dagger$, Hongyang R. Zhang$^\ddagger$, Chelsea Finn$^\dagger$ \& Christopher R\'{e}$^\dagger$  \\
$^\dagger$Stanford University,
$^\ddagger$Northeastern University \\
\texttt{\{mzhang,nims,hongyang,cbfinn,chrismre\}@cs.stanford.edu} \\
}

\maketitle

\begin{abstract}
\noindent
Spurious correlations pose a major challenge for robust machine learning. Models trained with empirical risk minimization (ERM) may learn to rely on correlations between class labels and spurious attributes, leading to poor performance on data groups without these correlations. 
This is particularly challenging to address when spurious attribute labels are unavailable.
To improve worst-group performance on spuriously correlated data without training attribute labels, we propose Correct-\textsc{n}-Contrast (\name{}), a contrastive approach to directly learn representations robust to spurious correlations. As ERM models can be good spurious attribute predictors, \name{} works by (1) using a trained ERM model’s outputs to identify samples with the same class but dissimilar spurious features, and (2) training a robust model with contrastive learning to learn similar representations for same-class samples. To support \name{}, we introduce new connections between worst-group error and a representation \textit{alignment loss} that \name{} aims to minimize. We empirically observe that worst-group error closely tracks with alignment loss, and prove that the alignment loss over a class helps upper-bound the class's worst-group vs. average error gap. On popular benchmarks, \name{} reduces alignment loss drastically, and achieves state-of-the-art worst-group accuracy by $\textbf{3.6}\%$ average absolute lift. \name{} is also competitive with oracle methods that require group labels.

\end{abstract}

\begin{figure*}[t]
  \centering
  \includegraphics[width=1\textwidth]{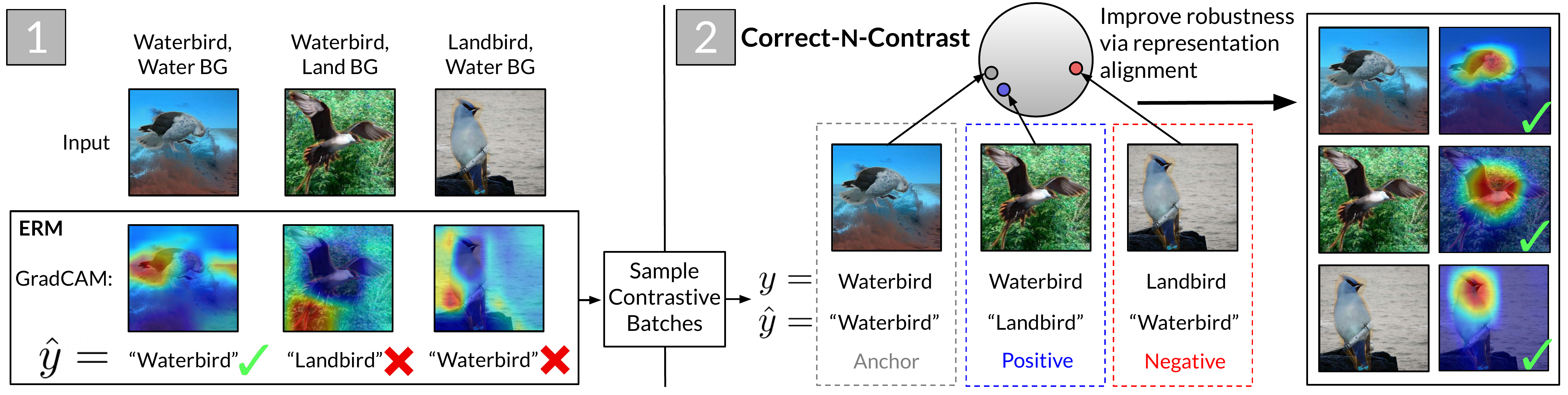}

  \caption{{(1) ERM-trained models classify by spurious features, shown via GradCAM \citep{selvaraju2017grad}. (2) \name{} learns similar representations for same-class samples with different ERM predictions to ignore spurious attributes and classify samples correctly.}}
  \label{fig:pull_gradcam_embeddings}
\end{figure*}

\section{Introduction}
\label{intro}

For many tasks, deep neural networks (NNs) are negatively affected by spurious correlations---dependencies between observed features and class labels that only hold for certain groups of data. For example, consider classifying cows versus camels in natural images. 90\% of cows may appear on grass and 90\% of camels on sand. A model trained with standard ERM may learn to minimize average training error by relying on the spurious \textit{background} attribute instead of the desired \textit{animal}, performing well on average yet obtaining high \textit{worst-group} test error (e.g. by misclassifying cows on sand as camels) \citep{ribeiro2016should, beery2018recognition}.
This illustrates a fundamental issue where trained models can become systematically biased due to spurious attributes in their data, which is critical to address in fields ranging from algorithmic fairness to medical machine learning \citep{blodgett2016demographic, buolamwini2018gender, hashimoto2018fairness}.

How can we improve robustness to spurious correlations and reduce worst-group error? If group or spurious attribute labels are available, one option is to reweight groups during training to directly minimize worst-group error, e.g. via group DRO (GDRO) \cite{sagawa2019distributionally}. 
However, such group labels may be expensive to obtain or unknown \emph{a priori} \citep{oakden2020hidden}, limiting the applicability of these ``oracle'' methods. 
To tackle spurious correlations without these labels, many prior works recognize that because ERM models can learn spurious correlations, such models can help infer the groups or spurious attributes. These methods thus first infer groups with a trained ERM model, before using these inferred groups to train a more robust model. 
For example, \citet{NEURIPS2020_e0688d13} cluster an ERM model's feature representations to infer groups, before running GDRO using these clusters as groups. \citet{NEURIPS2020_eddc3427, liu2021just} treat an ERM model's misclassifications as minority group samples, and upweight these groups to train a robust model. \citet{Ahmed2021SystematicGW, creager2021environment} use invariance objectives to both infer groups with an ERM model and train a more robust model.
However, while these methods significantly improve worst-group error over ERM without training group labels, they still perform worse than methods that use group labels (e.g., GDRO).

In this work, we thus aim to further improve worst-group accuracy without requiring group labels. We start with the motivating observation that across spurious correlation settings, a neural network's worst-group accuracy strongly tracks how well its representations---i.e., the outputs from its last hidden layer---exhibit dependence \textit{only} on ground-truth labels, and \textit{not} on spurious attributes. We quantify this property via estimated mutual information as well as a notion of geometric representation \textit{alignment}, inspired by prior work in contrastive learning \citep{wang2020understanding}. Here, the alignment measures how close samples with the same class but different spurious attributes embed in representation space. We first empirically observe that this dependence consistently holds across datasets with increasingly strong spurious correlations, and also helps explain when upweighting methods (e.g., \textsc{Jtt} \citep{liu2021just}) improve worst-group error over ERM. We then theoretically show that a model's alignment loss for a class (i.e., the average representation distance between same-class samples with different spurious attributes) can help upper bound the worst-group versus average error gap for that class. Thus, by improving alignment while keeping average class error low, we can improve the class's worst-group error. However, current methods do not directly optimize for these representation-level properties, suggesting one underexplored direction to improve  worst-group error.

We thus propose Correct-\textsc{n}-Contrast (\name{}), a two-stage contrastive learning approach for better-aligned representations and improved robustness to spurious correlations. The key idea is to use contrastive learning to ``push together'' representations for samples with the same class but different spurious attributes, while ``pulling apart'' those with different classes and the same spurious attribute (Fig.~\ref{fig:pull_gradcam_embeddings}). \name{} thus improves intra-class alignment while also maintaining inter-class separability, encouraging models to rely on features predictive of class labels but not spurious attributes. As we do not know the spurious attribute labels, \name{} first infers the spurious attributes using a regularized ERM model trained to predict class labels (similar to prior work).
Different from these works, we use these predictions to train another robust model via contrastive learning and a novel sampling strategy.
We randomly sample an anchor, select samples with the same class but different ERM predictions as hard positives {to ``push together''}, and samples from different classes but the same ERM prediction as hard negatives {to ``pull apart''}. This encourages ignoring spurious differences and learning class-specific similarities between anchor and positives, and also conversely encourages ignoring spurious similarities and learning class-specific differences between anchor and negatives.
\name{} thus \underline{correct}s for the ERM model's learned spurious correlations via \underline{contrast}ive learning.

We evaluate \name{} on four  spurious correlation benchmarks. Among methods that do not assume training group labels, \name{} substantially improves worst-group accuracy, obtaining up to \textbf{7.7\%} absolute lift (from 81.1\% to 88.8\% on CelebA), and averaging \textbf{3.6\%} absolute lift over the second-best method averaged over all tasks (\textsc{Jtt}, \citet{liu2021just}).
\name{} also nearly closes the worst-group accuracy gap with methods requiring training group labels, only falling short of GDRO's worst-group accuracy by 0.9 points on average. To help explain this lift, we find that \name{} indeed improves alignment and learns representations with substantially higher dependence on classes over spurious attributes. 
Finally, we run additional ablation experiments that show that: \name{} is more robust to noisy ERM predictions than prior methods; with spurious attribute labels, \name{} improves worst-group accuracy over GDRO by \textbf{0.9\%} absolute on average; and \name{}'s sampling approach improves performance compared to alternative approaches. These results show that our contrastive learning and sampling strategies are effective techniques to tackle spurious correlations.

\textbf{Summary of contributions}. We summarize our contributions as follows:
\begin{enumerate}
    \item We empirically show that a model's worst-group performance correlates with the model's alignment loss between different groups within a class, and theoretically analyze this connection.
    \item We propose \name{}, a two-stage contrastive approach with a hard negative sampling scheme to improve representation alignment and thus train models more robust to spurious correlations.
    \item We show that \name{} achieves state-of-the-art worst-group accuracy on three benchmarks and learns better-aligned representations less reliant on spurious features.
\end{enumerate}

\section{Preliminaries}
\label{sec:preliminaries}

\textbf{Problem setup.}
We present our setting and the loss objectives following \citet{sagawa2019distributionally}.
Let $X = \set{x_1,\ldots,x_n}$ and $Y = \set{y_1,\ldots,y_n}$ be our training dataset of size $n$.
Each datapoint has an observed feature vector $x_i \in \mathcal{X}$, label $y_i \in \mathcal{Y}$, and \emph{unobserved} spurious attribute $a_i \in \mathcal{A}$. The set of groups $\mathcal{G}$ is defined as the set of all combinations of class label and spurious attribute pairs, i.e. $\mathcal{G} = \mathcal{Y} \times \mathcal{A}$.
Let $C = |\cY|$ be the number of classes and $K = |\mathcal{G}|$ be the number of groups.
We assume that each example $(x_i, y_i, a_i)$ is drawn from an unknown joint distribution $P$, and at least one sample from each group is observed in the training data.
Let $P_g$ be the distribution conditioned on $(y, a) = g$, for any $g\in\cG$.
Given a model $f_\theta : \mathcal{X} \mapsto \real^C$ and a convex loss $\mathcal{\ell}: \cX\times\cY\mapsto\real$, the \emph{worst-group} loss is:
{\begin{equation}
    \mathcal{L}_\text{wg}(f_\theta) \define \max_{g \in \cG}\; \mathbb{E}_{(x, y, a) \sim P_g} [\ell(f_\theta(x), y)].
\label{eq:worst_group_error}
\end{equation}}
ERM minimizes the training loss as a surrogate for the expected average population loss $\cL_{\text{avg}}$:
{\begin{equation}
    \mathcal{L}_\text{avg}(f_\theta) \define \mathbb{E}_{(x, y, a) \sim P} [\ell(f_\theta(x), y)] 
    \label{eq:erm_loss}
\end{equation}}
While ERM is the standard way to train NNs, spurious correlations can cause ERM to obtain high minority group error even with low average error. Minimizing the empirical version of \eqref{eq:worst_group_error} via GDRO is a strong baseline for improving worst-group error, if training group labels $\{a_1,\dots,a_n\}$ are available \citep{sagawa2019distributionally}. We tackle the more challenging setting where training group labels are \textit{not available}.

\textbf{Contrastive learning.}
We briefly describe contrastive learning \citep{chen2020simple}, a central component of our approach.
Let $f_\theta$ be a neural network model with parameters $\theta$.
Let the encoder $f_\text{enc} : \mathcal{X} \mapsto \real^d$ be the feature representation layers of $f_{\theta}$.
Let $f_\text{cls}: \real^d \mapsto \real^{C}$ be the  classification layer of $f_{\theta}$, which maps encoder representations to one-hot label vectors. We learn $f_\text{enc}$ with 
the \textit{supervised contrastive loss} $\mathcal{L}_\text{con}^\text{sup}$ proposed in \citet{NEURIPS2020_d89a66c7}. For each anchor $x$, we sample $M$ positives $\{x^+_m\}_{m = 1}^M$ and $N$ negatives $\{x^-_n\}_{n = 1}^N$. Let $y, \{y_m^+\}_{m=1}^M, \{y_n^-\}_{n=1}^N$ be the labels and $z, \{z_m^+\}_{m=1}^M, \{z_n^-\}_{n=1}^N$ be the normalized outputs of $f_\text{enc}(x)$ for the anchor, positives, and negatives respectively. With input $x$ mapped to $z$, the training objective for the encoder is to minimize $\mathcal{L}_\text{con}^\text{sup}(x; f_\text{enc})$, defined as
\begin{equation}
\mathcal{L}_\text{con}^\text{sup}(x; f_\text{enc}) = 
    \exarg{z, \set{z_m^+}_{m=1}^M, \set{z_n^-}_{n=1}^N}{ - \log \frac{\exp( z^{\top} z_m^+ / \tau)}{\sum_{m=1}^M \exp( z^{\top} z_m^+ / \tau) + \sum_{n=1}^N \exp( z^{\top} z_n^- / \tau)} }
    \label{eq:supcon_easy_anchor}
\end{equation}
where $\tau > 0$ is a scalar temperature hyperparameter. 
Minimizing Eq.~\ref{eq:supcon_easy_anchor} leads to $z$ being closer to $z^+$ than $z^-$ in representation space. 

\section{Spurious correlations' impact on learned data representations}
\label{sec:erm_observations}
We present our key observation that a model's worst-group accuracy correlates with how well its learned representations depends on the class labels and \emph{not} the spurious attributes. We draw  connections between a neural network's worst-group error and its representation metrics, noting a strong inverse relationship between worst-group accuracy and a class-specific alignment loss. We then theoretically justify this relationship.

\subsection{Understanding worst-group performance using representation metrics}\label{sec_motivate_erm}

We first illustrate that when neural networks are trained with standard ERM on spuriously correlated data, their hidden layer representations exhibit high dependence on the spurious attribute. To better understand and connect this behavior to worst-group error, we quantify these results using representation alignment (cf. Eq.~\ref{eq:loss_align}) and mutual information metrics. We observe that these metrics explain trends in ERM's worst-group accuracy on various spuriously correlated datasets. These trends also apply to upsampling methods that mitigate the impact of spurious features.

\begin{wrapfigure}{r}{0.52\textwidth}
\vspace{-0.65cm}
\includegraphics[width=1\linewidth]{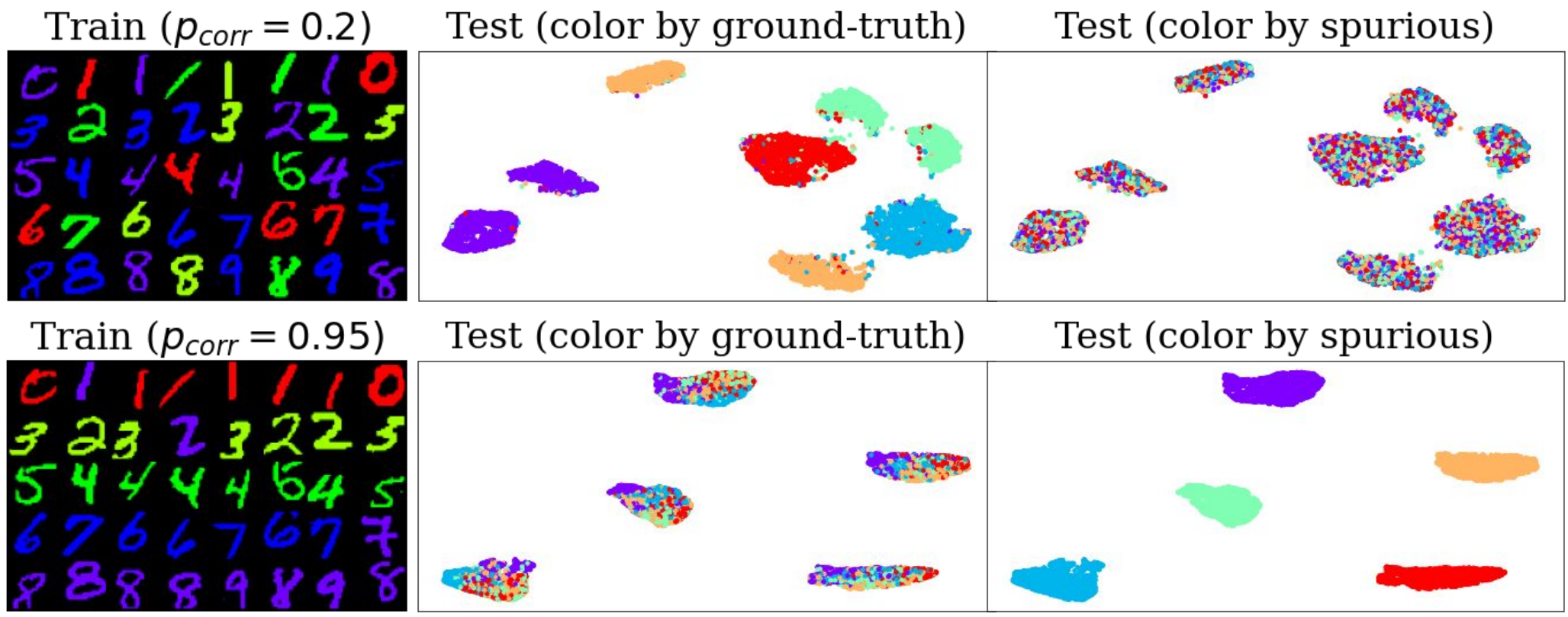}
\vspace{-0.65cm}
\caption{{ERM-learned representation UMAPs, trained on spuriously uncorrelated ($p_\text{corr} = 0.2$}) vs correlated ($p_\text{corr} = 0.95$) data.}
\label{fig:erm_motivation}
\vspace{-0.35cm}
\end{wrapfigure}

\textbf{Example setup.} We model spurious correlations with \cmnist, a colored MNIST dataset inspired by \citet{arjovsky2019invariant}.
There are $5$ digit classes and $5$ colors. We color a fraction $p_\text{corr}$ of the training samples with a color $a$ associated with each class $y$, and color the test samples uniformly-randomly. To analyze learned representations, we train a LeNet-5 CNN \citep{lecun1989backpropagation} with ERM to predict digit classes, and inspect the outputs of the last hidden layer $z = f_\text{enc}(x)$. As shown in Fig.~\ref{fig:erm_motivation}, with low $p_\text{corr}$, models learn representations with high dependence on the actual digit classes. However, with high $p_\text{corr}$, we learn $z$ highly dependent on $a$, despite only training to predict $y$. 

\textbf{Representation metrics.} To quantify this behavior, 
we use two  metrics designed to capture how well the learned representations exhibit dependence on the class label versus the spurious attributes. First, we compute an \emph{alignment loss} $\hat{\mathcal{L}}_\text{align}(f_{\enc}; g, g')$ between two groups $g = (y, a)$ and $g' = (y, a')$ where $a\neq a'$. This measures how well $f_\text{enc}$ maps samples with the same class, but different spurious attributes, to nearby vectors via Euclidean distance. 

Letting $G$ and $G'$ be subsets of training data in groups $g$ and $g'$ respectively, we define $\hat{\mathcal{L}}_\text{align}(f_\text{enc}; g, g')$ as
\begin{equation}
\hat{\mathcal{L}}_\text{align}(f_\text{enc}; g, g') =
\frac{1}{\abs{G}} \frac{1}{\abs{G'}} \sum_{(x, y, a) \in G}\sum_{(x', y, a') \in G'}{ \bignorm{f_\text{enc}(x) - f_\text{enc}(x')}_2}
    \label{eq:loss_align}
\end{equation}
Thus, lower $\hat{\mathcal{L}}_\text{align}$ means better alignment. We also quantify representation dependence by estimating the mutual information (MI) of a model's learned representations with the class label, i.e. $\hat{I}(Y; Z)$ and the spurious attributes $\hat{I}(A; Z)$. We defer computational details to Appendix~\ref{appendix:experimental_details}.

\begin{figure*}[h]
  \centering
  \includegraphics[width=1\textwidth]{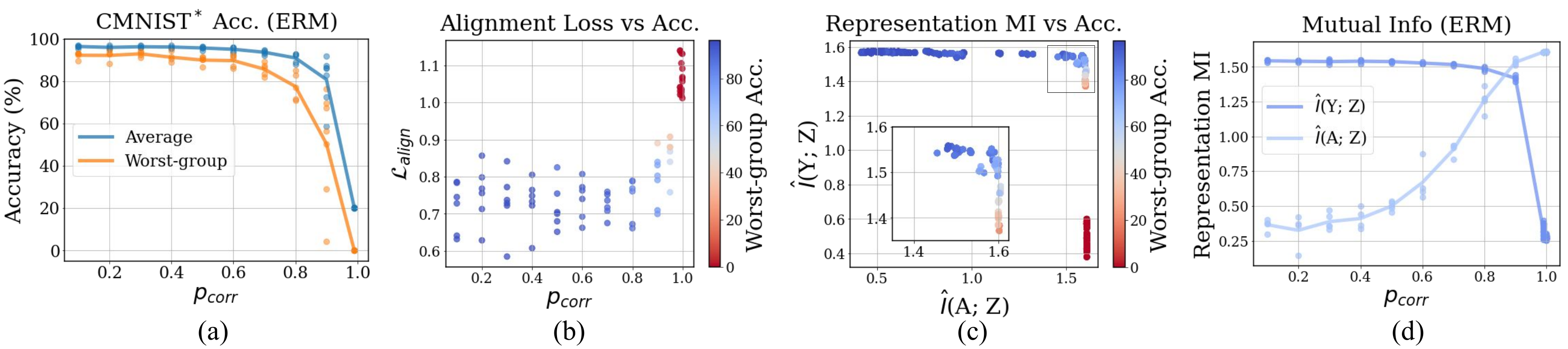}
  \vspace{-0.65cm}
  \caption{{Accuracy and representation metrics from ERM models trained on increasingly spuriously correlated Colored MNIST. Lower worst-group accuracy (Fig. 3a) corresponds to both higher alignment loss (Fig. 3b) and $\hat{I}(Y; Z) < \hat{I}(A; Z)$ (Fig. 3c, Fig. 3d).}}
  \label{fig:erm_ablation_cmnist}
  \vspace{-0.5cm}
\end{figure*}

\begin{wrapfigure}{r}{0.53\textwidth}
\vspace{-0.25cm}
\includegraphics[width=1\linewidth]{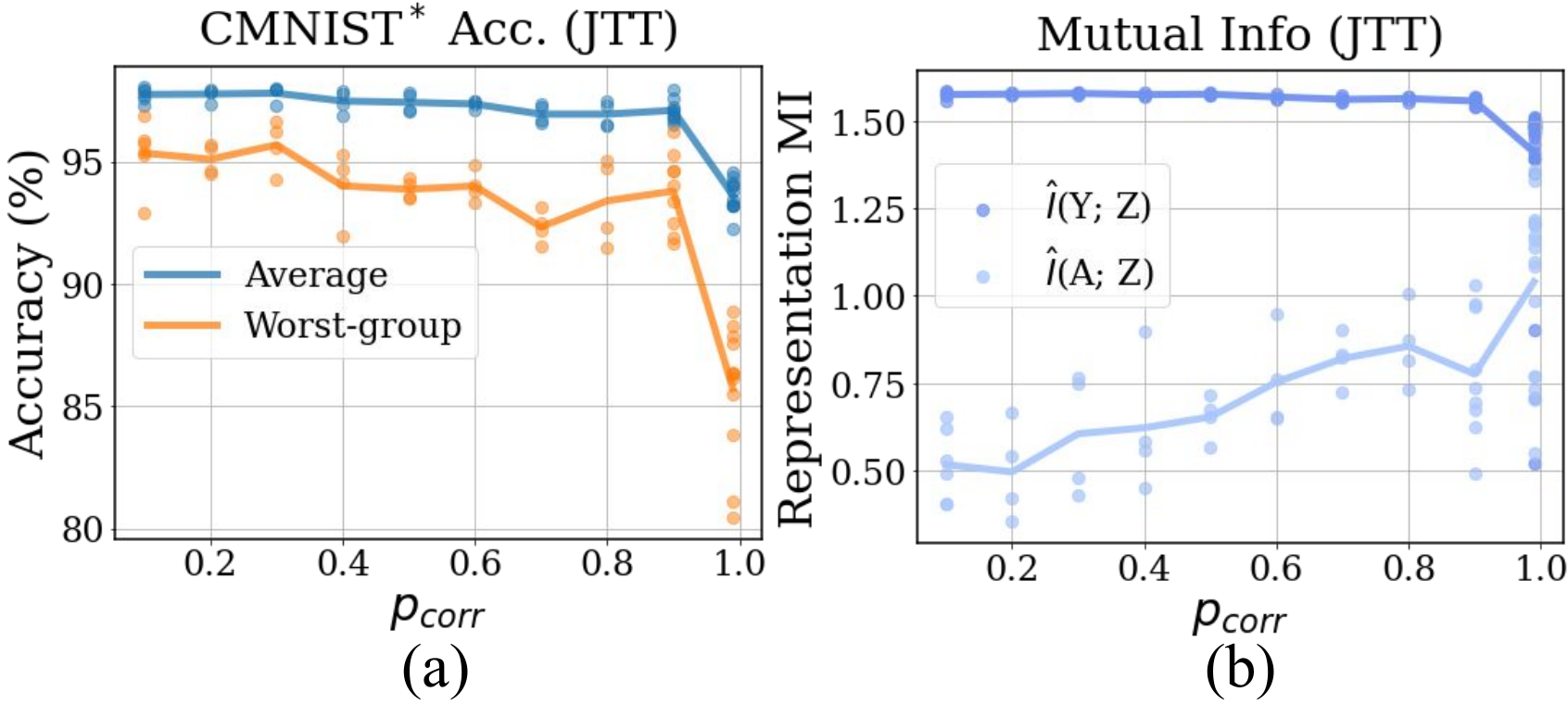}
\vspace{-0.75cm}
\caption{{Higher worst-group accuracy with \textsc{Jtt} (versus Fig.~\ref{fig:erm_ablation_cmnist}a) coincides with keeping $\hat{I}(Y; Z) \gg \hat{I}(A; Z)$.}}
\label{fig:erm_ablation_cmnist_upsampling}
\end{wrapfigure}

\textbf{Results for ERM.} In Fig.~\ref{fig:erm_ablation_cmnist} we find that worst-group error is strongly associated with both alignment and mutual information (MI) metrics. 
As $p_\text{corr}$ increases, ERM models not only drop in worst-group accuracy, but also incur higher alignment loss (Fig.~\ref{fig:erm_ablation_cmnist}ab). Fig.~\ref{fig:erm_ablation_cmnist}c further illustrates this with MI. We plot the estimated MI and worst-group accuracy for models at each epoch. A substantial drop in worst-group accuracy occurs with high $\hat{I}(A; Z)$ (especially when $\hat{I}(A; Z) > \hat{I}(Y; Z)$, even when $\hat{I}(Y; Z)$ is high). Fig.~\ref{fig:erm_ablation_cmnist}d also captures this trend: as $p_{\text{corr}}$ increases, $\hat{I}(A; Z)$ does as well while $\hat{I}(Y; Z)$ decreases.

\textbf{Results for \textsc{Jtt}.} In  Fig.~\ref{fig:erm_ablation_cmnist_upsampling}, we also show that this relation holds when training with another recent (upsampling-based) approach, \textsc{Jtt} \citep{liu2021just}. With high $p_\text{corr}$, \textsc{Jtt} achieves higher worst-group accuracy compared to ERM, and this corresponds to learning representations with high $\hat{I}(Y; Z)$ and low $\hat{I}(A; Z)$. However, we note that \textsc{Jtt} and other previous approaches do not explicitly optimize representation-level metrics, suggesting a new direction to improve worst-group performance.

\subsection{Justification that better alignment encourages lower worst-group loss}\label{sec_analysis}

Next, we give a rigorous justification of the relation from lower alignment loss to lower worst-group loss.
For any label $y\in\cY$, let $\cG_y$ be the set of groups with label $y$ in $\cG$.
\mbox{Let $\cL_{\wg}(f_{\theta}; y)$ be the worst-group loss  in {\small $\cG_y$}:}
{
\begin{equation*}
\cL_{\wg}(f_{\theta}; y) \define \max_{g\in\cG_y} \exarg{(x, \tilde y, a) \sim P_g}{ \ell(f_{\theta}(x), \tilde y) }.
\end{equation*}
}
Let $\cL_{\avg}(f_{\theta}; y)$ be the average loss among groups in $\cG_y$:
{
\begin{equation*}
\cL_{\avg}(f_{\theta}; y) \define \exarg{(x, \tilde y, a)\sim P : \forall a\in\cA}{\ell(f_{\theta}(x), \tilde y)}. 
\end{equation*}}
Additionally,
let {$\hat \cL_{\alig}(f_{\enc}; y)$} be the largest cross-group alignment loss among groups in $\cG_y$:
{
\begin{align}
    \hat{\cL}_{\alig}(f_{\theta}; y) \define \max_{g\in\cG_y, g'\in\cG_y:\, g\neq g'} \hat{\cL}_{\alig}(f_{\enc}; g, g'). \label{eq_cross_group}
\end{align}}
We state our result as follows.
\begin{theorem}\label{prop_close}
    In the setting described above,
    suppose the weight matrix of the linear classification layer $W$ satisfies  $\norm{W}_2 \le B$, for some  $B > 0$.
    Suppose the loss function $\ell(x, y)$ is $C_1$-Lipschitz in $x$ and bounded from above by $C_2$, for some $C_1 >0$ and $C_2 > 0$.
    Let $n_{g}$ be the size of any group $g\in\cG$ in the training set.
    Then, for any $\delta > 0$, with probability $1-\delta$, the following holds for any $y \in \cY$:
    {\begin{align}
        &\mathcal{L}_{\text{wg}}(f_{\theta}; y) 
        - \mathcal{L}_{\text{avg}}(f_{\theta}; y) 
        \le
        B C_1 \cdot \hat{\cL}_{\alig}(f_{\theta}; y) + 
         \max_{g\in\mathcal{G}_y} C_2\sqrt{ 8\log(|\cG_y| /\delta) / n_g}. \label{eq_result_align} 
    \end{align}}
\end{theorem}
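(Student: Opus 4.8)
The plan is to split the population gap $\cL_{\wg}(f_\theta; y) - \cL_{\avg}(f_\theta; y)$ into three pieces: (i) a worst \emph{pairwise population} loss gap between two groups of class $y$; (ii) a bound of that gap by the corresponding \emph{empirical} per-group loss gap plus a Hoeffding deviation term; and (iii) a bound of the empirical per-group loss gap by the empirical alignment loss, using Lipschitzness of $\ell$ and the operator-norm bound on $W$. Chaining these and maximizing over groups gives Eq.~\eqref{eq_result_align}.

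Step (i): reduce to a pairwise gap. I would first observe that $\cL_{\avg}(f_\theta; y) = \sum_{g'\in\cG_y} w_{g'}\,\mathbb{E}_{(x,y,a)\sim P_{g'}}[\ell(f_\theta(x),y)]$ is a convex combination of the per-group population losses, the weights $w_{g'}\ge 0$ being the conditional group frequencies given label $y$ and summing to one, whereas $\cL_{\wg}(f_\theta;y)=\max_{g\in\cG_y}\mathbb{E}_{P_g}[\ell(f_\theta(x),y)]$. Because the weights sum to one, $\cL_{\wg}(f_\theta;y)-\cL_{\avg}(f_\theta;y)=\sum_{g'\in\cG_y}w_{g'}\big(\max_{g\in\cG_y}\mathbb{E}_{P_g}[\ell]-\mathbb{E}_{P_{g'}}[\ell]\big)\le\max_{g,g'\in\cG_y}\big(\mathbb{E}_{P_g}[\ell]-\mathbb{E}_{P_{g'}}[\ell]\big)$, so it suffices to control each pairwise gap and take the worst pair $(g,g')$.

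Step (ii): population to empirical via Hoeffding. For each $g\in\cG_y$ set $\hat L_g=\frac{1}{n_g}\sum_{(x,y,a)\in G}\ell(f_\theta(x),y)$, the empirical loss on the $n_g$ training points of group $g$. Treating $f_\theta$ as fixed with respect to this sample, each summand lies in $[0,C_2]$, so Hoeffding's inequality plus a union bound over the $|\cG_y|$ groups gives, with probability $1-\delta$ and simultaneously for all $g\in\cG_y$, $|\mathbb{E}_{P_g}[\ell(f_\theta(x),y)]-\hat L_g|\le C_2\sqrt{\log(2|\cG_y|/\delta)/(2n_g)}$. Hence for any $g,g'$, $\mathbb{E}_{P_g}[\ell]-\mathbb{E}_{P_{g'}}[\ell]\le \hat L_g-\hat L_{g'}+C_2\big(\sqrt{\log(2|\cG_y|/\delta)/(2n_g)}+\sqrt{\log(2|\cG_y|/\delta)/(2n_{g'})}\big)$, and bounding each square root by its maximum over $\cG_y$ absorbs the deviation into $\max_{g\in\cG_y}C_2\sqrt{8\log(|\cG_y|/\delta)/n_g}$ (using $2\log(2|\cG_y|/\delta)\le 8\log(|\cG_y|/\delta)$, valid for $|\cG_y|\ge 2$ and $\delta\le 1$).

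Step (iii): empirical gap to alignment loss. I would write $\hat L_g-\hat L_{g'}$ as a double average over the $n_g n_{g'}$ pairs $(x,x')\in G\times G'$. For each pair, $\ell(f_\theta(x),y)-\ell(f_\theta(x'),y)\le C_1\norm{f_\theta(x)-f_\theta(x')}_2$ by $C_1$-Lipschitzness of $\ell(\cdot,y)$; and since $f_\theta$ composes $f_{\enc}$ with the linear classification layer of weight $W$, the additive bias cancels in the difference and $\norm{f_\theta(x)-f_\theta(x')}_2=\norm{W(f_{\enc}(x)-f_{\enc}(x'))}_2\le B\,\norm{f_{\enc}(x)-f_{\enc}(x')}_2$. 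Averaging over the pairs reproduces exactly $BC_1\,\hat{\cL}_{\alig}(f_{\enc};g,g')$, which is at most $BC_1\,\hat{\cL}_{\alig}(f_\theta;y)$ by Eq.~\eqref{eq_cross_group} (the case $g=g'$ is trivial since the gap is then $0$). Combining steps (i)--(iii) and maximizing over $g,g'\in\cG_y$ yields Eq.~\eqref{eq_result_align}. The main thing to be careful about — really the only obstacle — is the status of $f_\theta$ in step (ii): Hoeffding needs the evaluated model to be independent of the sample used to estimate the per-group expectations, so the statement is for a fixed $f_\theta$ (otherwise one pays an extra uniform-convergence term over the model class); making the convex-combination form of $\cL_{\avg}$ explicit and composing the Lipschitz and operator-norm bounds are the other points needing care, but are routine.
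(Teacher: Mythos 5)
Your proposal is correct and follows essentially the same route as the paper's proof: a convex-combination reduction to the worst pairwise gap between groups of class $y$, a per-group Hoeffding bound with a union bound over $\cG_y$ absorbed into the $\max_{g\in\cG_y} C_2\sqrt{8\log(|\cG_y|/\delta)/n_g}$ term, and the Lipschitz-plus-$\norm{W}_2\le B$ argument turning the empirical pairwise loss gap into $BC_1\hat{\cL}_{\alig}(f_\theta;y)$. Your caveat that Hoeffding treats $f_\theta$ as fixed relative to the sample is an implicit assumption the paper's proof shares, so it does not constitute a deviation or a gap.
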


The broader implication of our result is that reducing the alignment loss closes the gap between worst-group  and average-group losses.
The proof is deferred to Appendix \ref{sec_proof_align}.

\textbf{Broader  implications.}
We summarize this section with two takeaways: 
\begin{enumerate}
    \item When trained on spuriously correlated datasets, ERM networks learn data representations highly dependent on spurious attributes. Representation clusters \citep{NEURIPS2020_e0688d13} or the ERM model's outputs \citep{liu2021just, NEURIPS2020_eddc3427} can thus serve as (noisy) pseudolabels for spurious attributes.
    \item Both representation metrics correlate with worst-group error, such that a viable way to improve worst-group error is to improve alignment within each class.
\end{enumerate}

Both takeaways motivate our approach next.

\section{Our approach: Correct-\textsc{n}-Contrast (\name{})}
We now present \name{}, a two-stage method to improve worst-group performance and robustness to spurious correlations, \emph{without} training group labels. Similar to prior works, our first stage trains an ERM model with proper regularization\footnote{As we train on the same dataset we infer spurious attributes on, regularization (via high weight decay or early stopping) is to prevent the ERM model from memorizing train labels. This is standard (e.g. \citep{NEURIPS2020_e0688d13, liu2021just}). In Section~\ref{sec:results-ablation} we show that we do not require extremely accurate spurious attribute predictions 
to substantially improve robustness in practice.} on the training set, in order to infer spurious attributes.

The key difference is our second stage: we aim to train a more robust model by learning representations such that samples in the same class but different groups are close to each other. To do so, we use a contrastive learning strategy, proposing a new sampling scheme where we treat samples with the {same class} but {different spurious attributes} as distinct ``views'' {of the same class} (anchors and positives), while sampling anchors and negatives as samples with the same inferred spurious attributes but different classes. By training the second stage model with contrastive learning over these samples, we intuitively encourage this model to ``pull together'' samples' representations based on shared class-specific features, while ignoring different spurious features. To obtain such samples, we use the initial ERM model's predictions as spurious attribute proxies.

Later in Section~\ref{sec:results-main},~\ref{sec:results-representation}, we  show that \name{} indeed reduces $\hat{\mathcal{L}}_{\alig}(f_{\theta}; y)$ and substantially improves worst-group accuracy. In Section~\ref{sec:results-ablation} we also show that alternative sampling strategies degrade performance. We include further details on both stages below, and summarize \name{} in Algorithm~\ref{algo:cnc_algo}.

\label{sec:method}
\textbf{Stage 1: Inferring pseudo group labels.} 
We train an initial model $f_{\hat{\theta}}$ on the training dataset $\{(x_i, y_i)\}_{i=1}^n$ with ERM and regularization, and save its predictions $\{\hat{y}_i\}_{i=1}^n$ on the training datapoints.
We consider two ways to get predictions $\hat{y}$: (1) standard argmax over the ERM model's final-layer outputs (as in \citet{liu2021just}), and (2) clustering its last hidden-layer outputs into $C$ clusters\footnote{Recall that $C$ is the number of classes.} (similar to \citet{NEURIPS2020_e0688d13}). While both approaches exploit the ERM model's learned spurious correlations, we find clustering to lead to better performance  (cf.  Appendix~\ref{appendix:experimental_details_methods}).

\textbf{Stage 2: Supervised contrastive learning.}
Next, we train a robust model with supervised contrastive learning using the ERM predictions. Our approach is based on standard contrastive learning methods \citep{chen2020simple, NEURIPS2020_d89a66c7}, but we introduce new ``contrastive batch'' sampling and optimization objectives in order to induce robustness to spurious correlations.

\textbf{\textit{Contrastive batch sampling.}} As described in Section~\ref{sec:preliminaries}, contrastive learning involves sampling anchors, positives, and negatives with the general form $\{x\}, \{x^+\}, \{x^-\}$. Here, we wish to sample points such that by maximizing the similarity between anchors and positives (and keeping anchors and negatives apart), the Stage 2 model ``ignores'' spurious similarities while learning class-consistent dependencies. For each batch we randomly sample an \anc{} $x_i \in X$ (with label $y_i$ and ERM prediction $\hat{y}_i$), $M$ \pos{} with the same class as $y_i$ but a different ERM model prediction than $\hat{y}_i$, and $N$ \negs{} with different classes than $y_i$ but the same ERM model prediction as $\hat{y}_i$. 
For more comparisons per batch, we also switch anchor and positive roles (implementation details in Appendix~\ref{appendix:additional_algorithm_details_sampling}).

\begin{figure}[t!]
\begin{algorithm}[H]
\caption{Correct-\textsc{n}-Contrast (\name{})}
\label{algo:cnc_algo}
\begin{algorithmic}[1]
    \Input Training dataset $(X, Y)$; $\#$ positives $M$; $\#$ negatives $N$; learning rate $\eta$, \# epochs $K$. 
    \Statex{\textbf{Stage 1: Inferring pseudo group labels}}
    \State{Train ERM model $f_{\hat{\theta}}$ on ${(X, Y)}$; save $\hat{y}_i := f_{\hat{\theta}}(x_i)$.}
    \Statex{\textbf{Stage 2: Supervised contrastive learning}}
    \State{Initialize ``robust'' model $f_\theta$}
    \For{epoch $1,\dots,K$}
\For {\anc{} $(x, y) \in {(X, Y)}$}
    \State {Let $\hat{y} := f_{\hat{\theta}}(x)$ be the ERM model prediction of $x$.}
    \State {Get ${M}$ \pos{} $\set{(x_m^+, y_m^+)}$, where $y_m^+ = y$ and \underline{$\hat{y}_m^+ \neq \hat{y}$}, for $m = 1, \ldots, M$.}
    \State {Get ${N}$ \negs{} $\set{(x_n^-, y_n^-)}$, where $y_n^- \neq y$ and \underline{$\hat{y}_n^- = \hat{y}$},  for $n = 1, \ldots, N$.}
    \State {Update $f_\theta$ by $\theta \leftarrow \theta - \eta \cdot \nabla \hat{\cL}(f_{\theta}; x, y)$ (cf. Eq. \eqref{eq:full_loss}) with \anc{}, $M$ \pos{}, and $N$ \negs{}. \strut}
    \EndFor
    \EndFor
    \newline \Return final model $f_\theta$ from Stage 2.
\end{algorithmic}
\end{algorithm}
\vspace{-0.75cm}
\end{figure}

\textbf{\textit{Optimization objective and updating procedure.}}
Recall that we seek to learn aligned representations to improve robustness to spurious correlations. Thus, we also jointly train the full model to classify datapoints correctly. As we have the training \emph{class} labels, we jointly update the model's encoder layers $f_\text{enc}$ with a contrastive loss and the full model $f_\theta$ with a cross-entropy loss. Our overall objective is:
{
\setlength{\abovedisplayskip}{-2pt}
\setlength{\belowdisplayskip}{0pt}
{\small\begin{equation}
    \hat{\mathcal{L}}(f_\theta; x, y) = \lambda \hat{\mathcal{L}}_\text{con}^\text{sup}(f_\text{enc}; x, y) + (1 - \lambda) \hat{\mathcal{L}}_\text{cross}(f_\theta; x, y).
    \label{eq:full_loss} 
\end{equation}}}

Here $\hat{\mathcal{L}}_\text{con}^\text{sup}(f_\text{enc}; x, y)$ is the supervised contrastive loss of $x$ and its positive and negative samples, similar to Eq. \eqref{eq:supcon_easy_anchor} (see Eq \eqref{eq:full_contrastive_loss}, Appendix \ref{appendix:additional_algorithm_details_sampling} for the full equation);
$\hat{\mathcal{L}}_\text{cross}(f_\theta; x, y)$ is average cross-entropy loss over $x$, the $M$ positives, and $N$ negatives;
$\lambda {\in} [0, 1]$ is a balancing hyperparameter.

To calculate the loss, we first forward propagate one batch $\big(x_i, \{x_m^+\}_{m=1}^M, \{x_n^-\}_{n=1}^N\big)$ through $f_\text{enc}$ and normalize the outputs to obtain representation vectors $\big(z_i, \{z_m^+\}_{m=1}^M, \{z_n^-\}_{n=1}^N\big)$.  To learn closely aligned $z_i$ and $z_m^+$ for all $\{z_m^+\}_{m=1}^M$, we update $f_\text{enc}$ with the $\hat{\mathcal{L}}_\text{out}^\text{sup}(\cdot; f_\text{enc})$ loss. Finally, we also pass the unnormalized encoder outputs $f_\text{enc}$ to the classifier layers $f_{\text{cls}}$, and compute a batch-wise cross-entropy loss $\hat{\mathcal{L}}_\text{cross}(f_\theta)$ using each sample's class labels and $f_\theta$'s outputs.
Further details are in Appendix~\ref{appendix:additional_algorithm_details}.

\section{Experimental results}
\label{sec:exp_toplevel}
We conduct experiments to answer the following questions: (1) Does \name{} improve worst-group performance over prior state-of-the-art methods on datasets with spurious correlations? (2) To help explain any improvements, do \name{}-learned representations actually exhibit greater alignment and class-only dependence, and how is this impacted by the strength of a spurious correlation? 
(3) To better understand \name{}'s components and properties, how do ablations
on the Stage 1 prediction quality and the Stage 2 contrastive sampling strategy impact \name{} in practice?
We answer each question in the following sections. In Section~\ref{sec:results-main}, we show that \name{} substantially improves worst-group accuracy without group labels, averaging 3.6\% points higher than prior state-of-the-art. In Section~\ref{sec:results-representation}, we verify that more desirable representation metrics consistently coincide with these improvements. Finally in Section~\ref{sec:results-ablation}, we find that \name{} is more robust to inaccurate Stage 1 predictions than alternatives, improves worst-group accuracy by $0.9$ points over GDRO with group labels, and validates the importance of its sampling criteria.      
We present additional ablations on \name{}'s components, including the alignment approach, in Appendix \ref{appendix:add_benchmarks}.
We briefly describe the benchmark tasks we evaluate on below. Further details on datasets, models, and hyperparameters are in Appendix \ref{appendix:experimental_details}.

\textbf{Colored MNIST (\cmnist{})} \citep{arjovsky2019invariant}: We classify from $\mathcal{Y} = \{(0, 1), (2, 3), (4, 5), (6, 7), (8,9)\}$. We use $p_\text{corr} = 0.995$, so 99.5\% of samples are spuriously correlated with one of five colors $\in \mathcal{A}$.

\textbf{Waterbirds} \citep{
sagawa2019distributionally}: We classify from $\mathcal{Y} = \{\text{waterbird}, \text{landbird}\}$. 95\% of images have the same bird type $\mathcal{Y}$ and background type  $\mathcal{A} = \{\text{water}, \text{land}\}$.

\textbf{CelebA} \citep{liu2015deep}: We classify celebrities' hair color $\mathcal{Y} = \{\text{blond}, \text{not blond}\}$ with $\mathcal{A} = \{\text{male}, \text{female}\}$. Only 6\% of blond celebrities in the dataset are male. 

\textbf{CivilComments-WILDS} \citep{borkan2019nuanced, koh2021wilds}: We classify $\mathcal{Y} = \{\text{toxic}, \text{nontoxic}\}$ comments. $\mathcal{A}$ denotes a mention of one of eight demographic identities.

\begin{table*}[]
\caption{{Worst-group and average accuracies (over three seeds). \textbf{1st} / \underline{2nd} best worst-group accuracies \textbf{bolded} / \underline{underlined}. On image datasets, \name{} obtains substantially higher worst-group accuracy than comparable methods without group labels, competing with GDRO. \name{} also competes with SoTA on CivilComments. Starred results from original papers. Further implementation details in Appendix~\ref{appendix:experimental_details}.}}
\vspace{0.25cm}
\label{tab:main_results}
\centering
{
\tabcolsep=0.05cm
\begin{tabular}{@{}lbcbcbcbc@{}}
\toprule
\rowcolor{white}                     & \multicolumn{2}{c}{\cmnist{}}      & \multicolumn{2}{c}{Waterbirds}   & \multicolumn{2}{c}{CelebA}       & \multicolumn{2}{c}{CivilComments-WILDS} \\ \cmidrule(lr){2-3} \cmidrule(lr){4-5} \cmidrule(lr){6-7} \cmidrule(lr){8-9}

Accuracy (\%)                        & Worst-group         & Average       & Worst-group         & Average       & Worst-group         & Average       & Worst-group           & Average            \\ \midrule
ERM                             & 0.0 (0.0)             & 20.1 (0.2) & 62.6 (0.3)          & 97.3 (1.0) & 47.7 (2.1)          & 94.9 (0.3) & 58.6 (1.7)             & 92.1 (0.4)     \\
LfF                             & 0.0 (0.0)           & 25.0 (0.5) & 78.0 (-)$^*$           & 91.2$^*$   & 77.2 (-)$^*$            & 85.1 (-)$^*$   & 58.8 (-)$^*$               & 92.5 (-)$^*$       \\
\textsc{George}                          & \underline{76.4 (2.3)}          & 89.5 (0.3) & 76.2 (2.0)$^*$          & 95.7 (0.5)$^*$ & \,54.9 (1.9)$^*$          & 94.6 (0.2)$^*$ & -                      & -              \\
PGI                             & 73.5 (1.8)          & 88.5 (1.4) & 79.5 (1.9)          & 95.5 (0.8) & \underline{85.3 (0.3)} &	87.3 (0.1) & -                      & -              \\
CIM                             & 0.0 (0.0)           & 36.8 (1.3) & 77.2 (-)$^*$           & 95.6 (-)$^*$  & 83.6 (-)$^*$           & 90.6 (-) $^*$  & N/A                    & N/A            \\
EIIL                            & 72.8 (6.8)          & 90.7 (0.9) & 77.2 (1.0)          & 96.5 (0.2) & 81.7 (0.8)          & 85.7 (0.1) & 67.0 (2.4)$^*$             & 90.5 (0.2)$^*$     \\
\jtt{}         & 74.5 (2.4)          & 90.2 (0.8) & \underline{83.8 (1.2)}          & 89.3 (0.7) & 81.5 (1.7)          & 88.1 (0.3) & \textbf{69.3} (-)$^*$      & 91.1 (-)$^*$       \\
\name{} (Ours) & \textbf{77.4 (3.0)} & 90.9 (0.6) & \textbf{88.5 (0.3)} & 90.9 (0.1) & \textbf{88.8 (0.9)} & 89.9 (0.5) & \underline{68.9 (2.1)}             & 81.7 (0.5)     \\ \midrule
Group DRO                       & 78.5 (4.5)          & 90.6 (0.1) & 89.9 (0.6)          & 92.0 (0.6) & 88.9 (1.3)          & 93.9 (0.1) & 69.8 (2.4)             & 89.0 (0.3)    \\ \bottomrule
\end{tabular}
}
\vspace{-0.25cm}
\end{table*}

\subsection{Comparison of worst-group performance}
\label{sec:results-main}
To study (1), we evaluate \name{} on image classification and NLP tasks with spurious correlations. As baselines, we compare against standard ERM and an `oracle' GDRO approach that assumes access to the group labels. We also compare against recent methods that tackle spurious correlations without requiring group labels: \textsc{George} \citep{NEURIPS2020_e0688d13}, Learning from Failure (LfF) \citep{NEURIPS2020_eddc3427}, 
{Predictive Group Invariance (PGI) \citep{Ahmed2021SystematicGW}},
Environment Inference for Invariant Learning (EIIL) \citep{creager2021environment}, Contrastive Input Morphing (CIM) \citep{taghanaki2021robust},
and Just Train Twice (\textsc{Jtt}) \citep{liu2021just}. 
\name{} achieves \textbf{highest} worst-group accuracy among all methods without training group labels on \cmnist, Waterbirds, and CelebA, and near-SoTA worst-group accuracy on CivilComments-WILDS  (Table~\ref{tab:main_results}).

While LfF, \textsc{George}, 
PGI, EIIL, 
and \textsc{Jtt} similarly use a trained ERM model to estimate groups, \name{} uniquely uses ERM predictions to learn desirable representations via contrastive learning. By contrasting positives and negatives, we reason that \name{} more strongly encourages ignoring spurious attributes compared to prior invariance, input transformation, or upweighting approaches. We include additional support for this via GradCAM visualizations in Appendix~\ref{appendix:gradcams}.

\subsection{Detailed analysis of representation metrics}
\label{sec:results-representation}
To shed light on \name{}'s worst-group accuracy gains, we study if models trained with \name{} actually learn representations with higher alignment. Compared to ERM and \textsc{Jtt} (which obtained second highest worst-group accuracy on average), \name{} learns representations with significantly higher alignment (lower alignment loss)  and lower mutual information with spurious attributes, while having comparable mutual information with class labels (Fig.~\ref{fig:representation_effect_wbirds_celeba}).
This corresponds to \name{} models achieving the highest worst-group accuracy on Waterbirds and CelebA. Further, while all methods produce representations with high mutual information with class labels  (Fig.~\ref{fig:representation_effect_wbirds_celeba}b), compared to other methods, \name{} representations drastically reduce mutual information with spurious attributes (Fig.~\ref{fig:representation_effect_wbirds_celeba}c). In Fig.~\ref{fig:representation_visuals}, we further illustrate this  via UMAP visuals of the learned Waterbirds representations. All methods lead to class-separable representations. However, ERM's representations exhibit stronger separability by spurious attributes, and \textsc{Jtt}'s also have some spurious attribute dependency. \name{} uniquely learns representations that strongly depict class-only dependence.

\begin{figure*}[h]
  \centering
  \includegraphics[width=1\textwidth]{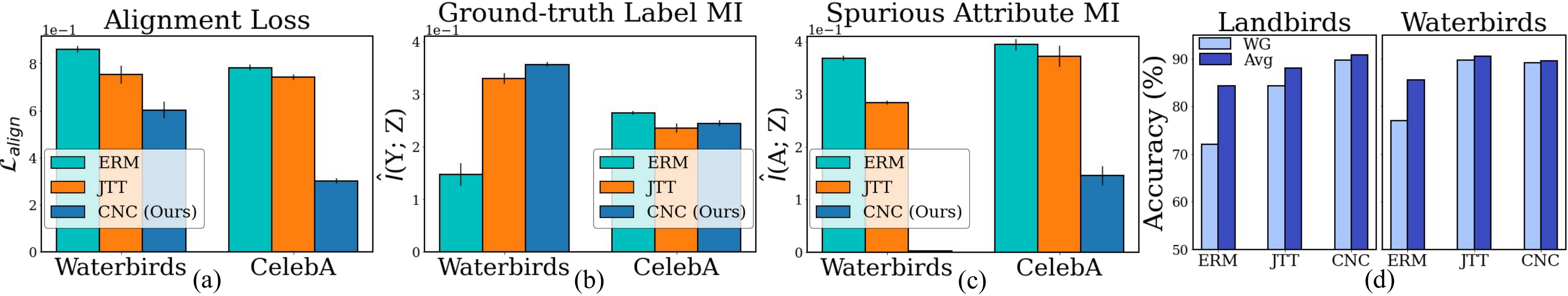}
  \vspace{-0.5cm}
  \caption{{Model alignment loss (a) and mutual information (b, c) after training with ERM, \textsc{Jtt}, and \name{}. \name{} most effectively reduces spurious attribute dependence, {and obtains smaller gaps for per-class worst-group versus average error (d), as supported by Theorem \ref{prop_close}}.}}
  \label{fig:representation_effect_wbirds_celeba}
  \vspace{-0.5cm}
\end{figure*}

\begin{figure*}[t!]
  \centering
  \includegraphics[width=1.0\textwidth]{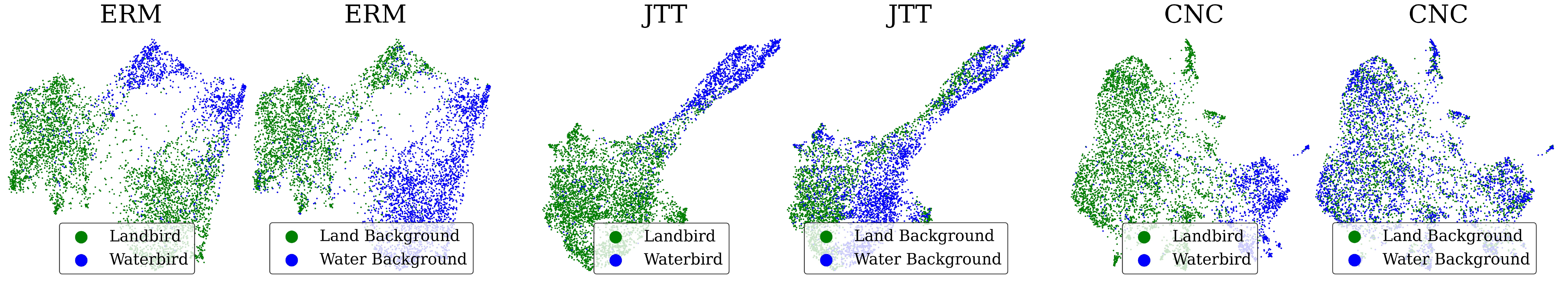}
  \vspace{-0.25cm}
  \caption{{Waterbirds model representations, visualized by UMAP. ERM representations heavily depend on both ground-truth $\mathcal{Y}$ and spurious $\cA$, with more separability by $\cA$. \textsc{Jtt} leads to greater separability by $\mathcal{Y}$, but still carries dependence on $\cA$. \name{} best removes  dependence on $\cA$.}}
  \label{fig:representation_visuals}
\end{figure*}

\begin{wrapfigure}{r}{0.53\textwidth}
  \centering
  \vspace{-0.25cm}
  \includegraphics[width=1\linewidth]{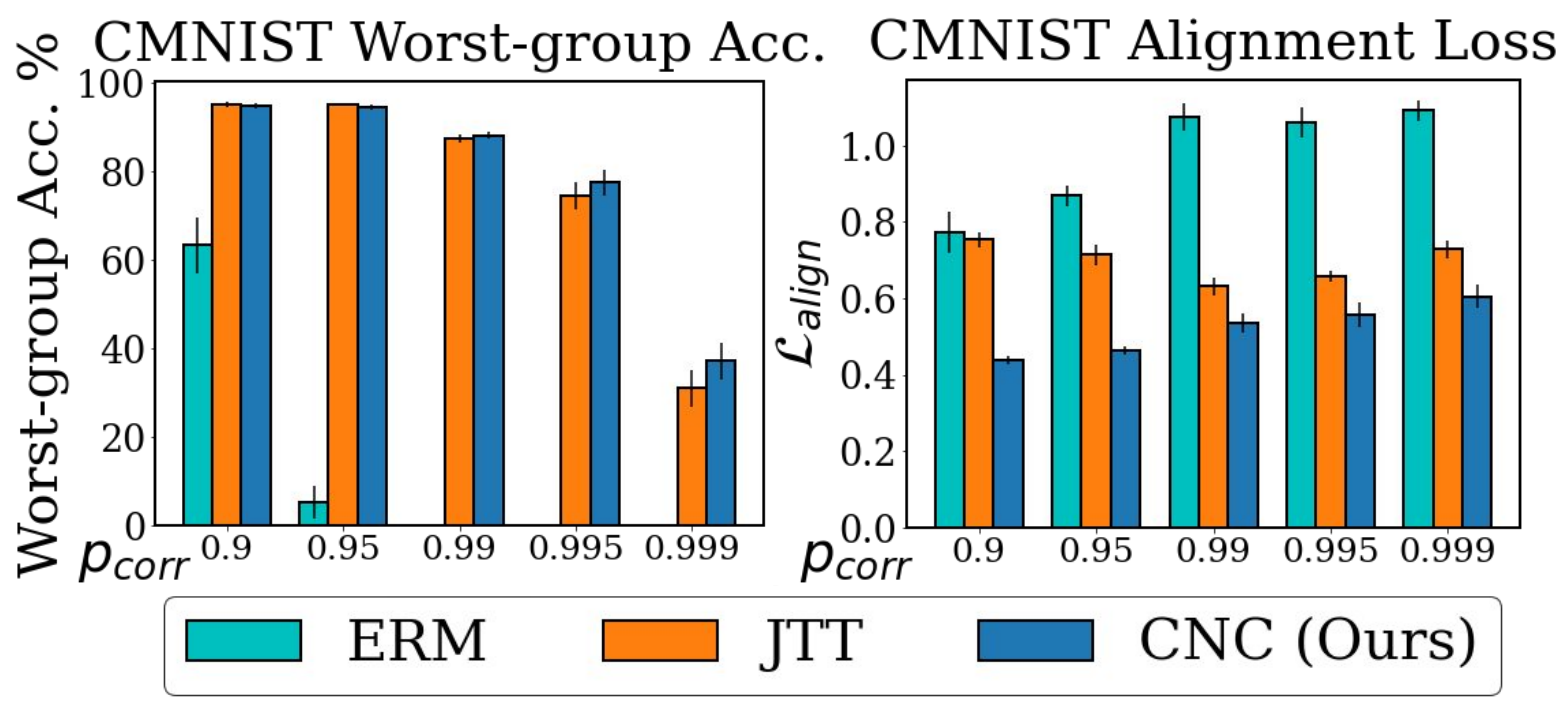}
  \vspace{-0.55cm}
  \caption{{Worst-group accuracy and alignment loss on \cmnist{} with increasing spurious correlation strength. \name{}'s higher accuracy coincides w/lower $\mathcal{L}_\text{align}$.}}
  \label{fig:representation_effect}
\end{wrapfigure}

In addition, to study how this relation between representation metrics and worst-group accuracy scales with the strength of the spurious correlation, we compute representation metrics with \textsc{CnC}, ERM, and \textsc{Jtt} models trained on increasingly spurious ($\uparrow p_{\text{corr}}$) \cmnist{} datasets (Fig.~\ref{fig:representation_effect}). While \name{} and \textsc{Jtt} maintain high worst-group accuracy where ERM fails, \name{} also performs better in more spurious settings ($p_\text{corr} >$  0.99). Lower alignment loss reflects these improvements (averaged over classes); \name{} consistently achieves lowest alignment loss. We report similar results for mutual information in Appendix~\ref{appendix:mutual_info_metrics_increasing_correlations}.

\subsection{Ablation studies}
\label{sec:results-ablation}
We study the importance of each individual component of our algorithm. We first study how \name{} is affected by how well the Stage 1 model predicts spurious attributes, finding \name{} more robust to noisy ERM predictions than comparable methods. We next evaluate \name{} with true spurious labels, finding \name{} to outperform GDRO with this group information. We finally ablate \name{}'s sampling criteria, and find our hard sampling strategy integral for substantial worst-group accuracy gains. In Appendix~\ref{appendix:loss_align_direct_comparison}, we compare other approaches for representation alignment, and find that ours achieve the highest worst-group accuracy.

\begin{wraptable}{r}{0.6\textwidth}
\vspace{-0.75cm}
\centering
{
\small
\caption{\cmnist{} Stage 2 model accuracies with noisy Stage 1 ``predictions''. \name{} is more robust than \textsc{Jtt} to increasing noise $p$. }
\vspace{0.25cm}
\tabcolsep=0.075cm
\label{tab:cmnist_noise_ablation}
\begin{tabular}{@{}lccccc@{}}
\toprule
\rowcolor{white}
\multicolumn{2}{l}{Noise Probability ($p$)} & 0                   & 0.01                & 0.1           & 0.25          \\ \midrule
\multirow{2}{*}{Worst-group Acc.}                 & JTT               & 78.4 \scriptsize{(2.3)}          & 70.2  \scriptsize{(4.7)}         & 53.8 \scriptsize{(6.0)}    & 43.6 \scriptsize{(7.3)}    \\
& CNC               & \textbf{80.6 \scriptsize{(2.8)}} & \textbf{76.5 \scriptsize{(4.1)}} & \textbf{61.3 \scriptsize{(6.0)}} & \textbf{53.2 \scriptsize{(7.0)}} \\
\midrule
\multirow{2}{*}{Average Acc.}                & JTT               & 89.7 \scriptsize{(0.9)}          & 86.7 \scriptsize{(1.3)}          & 79.5 \scriptsize{(1.6)}    & 72.9 \scriptsize{(1.4)}    \\
& CNC               & \textbf{92.1 \scriptsize{(0.5)}} & \textbf{89.4 \scriptsize{(1.4)}} & \textbf{81.7 \scriptsize{(2.0)}} & \textbf{74.9 \scriptsize{(2.0)}} \\ \bottomrule
\end{tabular}
}
\vspace{-0.205cm}
\end{wraptable}

\textbf{Influence of Stage 1 predictions.} ERM predictions can still be noisy proxies for spurious attributes; 
we thus study if a practical benefit of \name{}'s contrastive approach is more robustness to this noise compared to alternatives that also use ERM predictions, e.g., \jtt{}'s upsampling.
We run \name{} and \jtt{} with added noise to the same Stage 1 predictions---where \jtt{} upsamples the class-incorrect ones---and compare robust model accuracies. On \cmnist, starting with true spurious attribute labels as oracle Stage 1 ``predictions'', we add noise by replacing each label randomly with probability $p$. In Table~\ref{tab:known_group_label_comparison}, while both methods' accuracies drop as $p$ increases (the ``predictions'' degrade),  \name{} consistently achieves higher accuracy that degrades less than \jtt{}. 
On real data, \name{} also does not require perfectly inferring spurious attributes to work well. For Table~\ref{tab:main_results} Waterbirds and CelebA results, the Stage 1 ERM model predicts the spurious attribute with $94.7\%$ and $84.0\%$ accuracy respectively.

\newpage

\begin{wraptable}{r}{0.6\textwidth}
\vspace{-0.4cm}
\centering
{
\small
\caption{\small Accuracy (\%) using spurious attribute training labels. On average, \name{} obtains $0.9$ points higher worst-group (WG) over GDRO.}
\vspace{0.25cm}
\tabcolsep=0.025cm
\label{tab:known_group_label_comparison}
\begin{tabular}{@{}lbcbcbc@{}}
\toprule
\rowcolor{white}
 & \multicolumn{2}{c}{\cmnist{}}                                       & \multicolumn{2}{c}{Waterbirds}                                    & \multicolumn{2}{c}{CelebA}                                        \\
\cmidrule(lr){2-3} \cmidrule(lr){4-5} \cmidrule(lr){6-7}
 Acc.    & WG                                          & Avg.                & WG                                          & Avg.                & WG                                          & Avg.                \\

\midrule
\name{}*   & \textbf{80.6 \scriptsize{(2.8)}} & \textbf{92.4 \scriptsize{(0.2)}} & \textbf{90.1 \scriptsize{(0.2)} }         & \textbf{92.4 \scriptsize{(0.2)}} & \textbf{89.2 \scriptsize{(1.0)}} & 92.6 \scriptsize{(0.4)}          \\
GDRO   & 78.5 \scriptsize{(4.5)}          & 90.6 \scriptsize{(0.1)}          & 89.9 \scriptsize{(0.6)} & 92.0 \scriptsize{(0.2)} & 88.9 \scriptsize{(1.3)}          & \textbf{93.9 \scriptsize{(0.1)}} \\
\bottomrule
\end{tabular}
}
\vspace{-0.15cm}
\end{wraptable}

\textbf{Training with spurious labels.}
We next study \name{}'s performance with true spurious attribute labels on additional datasets. We replace the Stage 1 predictions with true group labels (denoted \name{}*) and compare with GDRO---the prior oracle baseline which uses group labels---in Table~\ref{tab:known_group_label_comparison}. We find \name{} improves with spurious labels, now obtaining $0.9\%$ and $0.2\%$ absolute lift in worst-group and average accuracy over GDRO, suggesting that \name{}'s contrastive approach can also be beneficial in settings with group labels.

\textbf{Alternate sampling strategies.} We finally study the importance of \name{}'s sampling by ablating individual criteria. Instead of sampling negatives by different class and same ERM prediction as anchors (\name{}), we try sampling negatives only with different classes (Different Class) or the same ERM prediction (Same Prediction) as anchors. We also try sampling both positives and negatives only by class, similar to \citet{NEURIPS2020_d89a66c7} (SupCon). Without both hard positive and negative sampling, we hypothesize naïve contrastive approaches could still learn spurious correlations (e.g., pulling apart samples different in spurious attribute and class by relying on spurious differences). In Table~\ref{tab:cnc_ablation_results}, we find these sampling ablations indeed result in lower worst-group accuracy, accompanied by less desirable representation metrics.

\begin{table*}[h]
\caption{Accuracy (\%) and representation metrics ($\times 0.1$) for \name{} sampling method ablations. Removing components of \name{}'s sampling criteria reduces worst-group (WG) acc., generally in line with higher $\mathcal{L}_\text{align}$, lower class dependence $\hat{I}(Y; Z)$, and higher spurious attribute dependence $\hat{I}(A; Z)$ than default \name{}.}
\vspace{0.25cm}
\label{tab:cnc_ablation_results}
\centering
{
\tabcolsep=0.5cm
\begin{tabular}{@{}lcbccc@{}}
\toprule
\rowcolor{white}
           & \multicolumn{5}{c}{Waterbirds}
           \\ \cmidrule(lr){2-6}
Method          &  Average Acc.    & WG Acc.     & $\mathcal{L}_\text{align}$ [$\downarrow$]    & $\hat{I}(Y; Z)$ [$\uparrow$]    & $\hat{I}(A; Z)$ [$\downarrow$]\\ \midrule
Different Class       & 
95.6 \normalsize{(0.1)} & 
82.2 \normalsize{(1.0)} & 

6.22 \normalsize{(0.12)} & 
\textbf{3.59 \normalsize{(0.05)}} & 
1.60 \normalsize{(0.13)} \\ 

Same Prediction        & 
93.6 \normalsize{(0.5)} & 
86.1 \normalsize{(0.5)} &  

6.67 \normalsize{(0.13)} & 
3.50 \normalsize{(0.13)} & 
0.16 \normalsize{(0.09)} \\

SupCon           & 
\textbf{96.8 \normalsize{(0.2)}} & 
62.3 \normalsize{(2.2)} & 

6.93 \normalsize{(0.44)} & 
\textbf{3.59 \normalsize{(0.06)}} & 
4.84 \normalsize{(0.13)} \\

\name{}                 & 

90.9 \normalsize{(0.1)} & 
\textbf{88.5 \normalsize{(0.3)}} & 

\textbf{6.02 \normalsize{(0.35)}} & 
3.56 \normalsize{(0.04)} & 
\textbf{0.02 \normalsize{(0.01)}} \\
\bottomrule
\end{tabular}
}
\end{table*}

\begin{table*}[h]
\vspace{-0.4cm}
\label{tab:cnc_ablation_results}
\centering
{
\tabcolsep=0.5cm
\begin{tabular}{@{}lcbccc@{}}
\toprule
\rowcolor{white}
           & \multicolumn{5}{c}{CelebA}                                       \\ \cmidrule(lr){2-6}
Method          &  Average Acc.    & WG Acc.     & $\mathcal{L}_\text{align}$ [$\downarrow$]    & $\hat{I}(Y; Z)$ [$\uparrow$]    & $\hat{I}(A; Z)$ [$\downarrow$]  \\ \midrule
Different Class       & 
89.5 \normalsize{(0.1)} &
79.2 \normalsize{(0.4)} & 
 
3.45 \normalsize{(0.04)} & 
2.34 \normalsize{(0.03)} & 
2.38 \normalsize{(0.06)} \\

Same Prediction        & 
88.2 \normalsize{(1.3)} &
75.0 \normalsize{(5.9)} & 
 
3.37 \normalsize{(0.59)} & 
2.08 \normalsize{(0.04)} & 
2.13 \normalsize{(0.53)} \\

SupCon           & 
\textbf{90.4 \normalsize{(0.5)}} & 
61.5 \normalsize{(2.0)} & 

3.83 \normalsize{(0.23)} & 
2.29 \normalsize{(0.04)} & 
3.22 \normalsize{(0.13)} \\

\name{}                 & 

89.9 \normalsize{(0.5)} &
\textbf{88.8 \normalsize{(0.9)}} & 
 
\textbf{3.00 \normalsize{(0.12)}} & 
\textbf{2.44 \normalsize{(0.06)}} & 
\textbf{1.45 \normalsize{(0.18)}} \\
\bottomrule
\end{tabular}
}
\end{table*}

\section{Related work}\label{sec_related}

There is a growing literature on how to improve robustness to spurious correlations, which is a key concern in many settings due to their resulting inherent bias against minority groups. If group labels are known, prior works often design a method to {balanced groups of different sizes}, whether via class balancing \citep{he2009learning, cui2019class}, importance weighting \citep{shimodaira2000improving, byrd2019effect}, or robust optimization \citep{sagawa2019distributionally}.

Our work is more related to methods that do not require  group labels during training. Such methods commonly first train an initial ERM model, before using this model to train a second robust model. \textsc{George} \citep{NEURIPS2020_e0688d13} run GDRO with groups formed by clustering ERM representations. LfF \citep{NEURIPS2020_eddc3427} and \textsc{Jtt} \cite{liu2021just} train a robust model by upweighting or upsampling the misclassified points of an ERM model. EIIL \citep{creager2021environment} and PGI \citep{Ahmed2021SystematicGW} infer groups that maximally violate the invariant risk minimization (IRM) objective \citep{arjovsky2017towards} for the ERM model. With these groups EIIL trains a robust model with GDRO, while PGI minimizes the KL divergence of softmaxed logits for same-class samples across groups. CIM \citep{taghanaki2021robust} instead trains a transformation network to remove potentially spurious attributes from image input features. While these approaches can also encourage alignment, our approach more directly acts on a model's representations via contrastive learning. \name{} thus leads to better alignment {empirically} as measured by our representation metrics.

Our proposed algorithm draws inspiration from the literature on self-supervised contrastive learning. which works by predicting whether two inputs are ``similar'' or ``dissimilar'' \citep{le2020contrastive}. This involves specifying batches of \textit{anchor} and \textit{positive} datapoints similar to each other (as different ``views'' of the same source or input), and \textit{negatives} depicting dissimilar points. 
In contrastive learning, ``negatives'' are often sampled uniformly \citep{NEURIPS2019_ddf35421}, while ``positives'' are different views of the same object, e.g. via data augmentation \citep{chen2020simple}. In supervised contrastive learning, negatives are different-class points and positives are same-class points \citep{NEURIPS2020_d89a66c7}. Our approach  treats same-class points with different ERM predictions as positives, and different-class points with the same ERM prediction as negatives. This naturally provides a form of \textit{hard negative mining}, a nontrivial component of recent contrastive learning shown to improve performance \citep{Robinson2020ContrastiveLW, Wu2020ConditionalNS, Chuang2020DebiasedCL}. Finally, \name{} is also partly inspired by \citet{wang2020understanding}, who show that minimizing the contrastive loss improves representation alignment between distinct ``views.''

We refer the reader to Appendix \ref{sec:extended_related_work} for further discussion of other related works.

\section{Conclusion}
We present \name{}, a two-stage contrastive learning approach to learn representations robust to spurious correlations.
We empirically observe, and theoretically analyze, the connection between alignment and worst-group versus average-group losses.
\name{} improves the quality of learned representations by making them more class-dependent and less spurious-attribute-dependent, and achieves state-of-the-art or near-state-of-the-art worst-group accuracy across several benchmarks.

\section*{Acknowledgments}
We thank Jared Dunnmon, Karan Goel, Megan Leszczynski, Khaled Saab, Laurel Orr, and Sabri Eyuboglu for helpful discussions and feedback.

We gratefully acknowledge the support of NIH under No. U54EB020405 (Mobilize), NSF under Nos. CCF1763315 (Beyond Sparsity), CCF1563078 (Volume to Velocity), and 1937301 (RTML); ARL under No. W911NF-21-2-0251 (Interactive Human-AI Teaming); ONR under No. N000141712266 (Unifying Weak Supervision); ONR N00014-20-1-2480: Understanding and Applying Non-Euclidean Geometry in Machine Learning; N000142012275 (NEPTUNE); Apple, NXP, Xilinx, LETI-CEA, Intel, IBM, Microsoft, NEC, Toshiba, TSMC, ARM, Hitachi, BASF, Accenture, Ericsson, Qualcomm, Analog Devices, Google Cloud, Salesforce, Total, the HAI-GCP Cloud Credits for Research program, the Stanford Data Science Initiative (SDSI),
and members of the Stanford DAWN project: Facebook, Google, and VMWare. The U.S. Government is authorized to reproduce and distribute reprints for Governmental purposes notwithstanding any copyright notation thereon. Any opinions, findings, and conclusions or recommendations expressed in this material are those of the authors and do not necessarily reflect the views, policies, or endorsements, either expressed or implied, of NIH, ONR, or the U.S. Government.

\newpage
\bibliography{main}
\bibliographystyle{plainnat}
\balance

\newpage
\appendix

\section{Contrastive algorithm design details}\label{appendix:additional_algorithm_details}

In this section, we provide further details on the training setup and contrastive batch sampling, algorithmic pseudocode, and additional components related to \name{}'s implementation.

\subsection{Training setup}\label{appendix:additional_algorithm_details_training_setup}
In Fig.~\ref{fig:method_overview}, we illustrate the two training stages of Correct-\textsc{n}-Contrast described in Sec.~\ref{sec:method}. In Stage 1, we first train an ERM model with a cross-entropy loss. For consistency with Stage 2, we depict the output as a composition of the encoder and linear classifier layers. Then in Stage 2, we train a new model with the same architecture using contrastive batches sampled with the Stage 1 ERM model and a supervised contrastive loss \eqref{eq:supcon_easy_anchor} (which we compute after the depicted representations are first normalized) to update the encoder layers. Note that unlike prior work in contrastive learning \citep{chen2020simple, NEURIPS2020_d89a66c7}, as we have the class labels of the anchors, positives, and negatives, we also continue forward-passing the unnormalized representations (encoder layer outputs) and compute a cross-entropy loss to update the classifier layers while jointly training the encoder. 

\begin{figure}[h]
  \vspace{-0.25cm}
  \centering
  \includegraphics[width=1\textwidth]{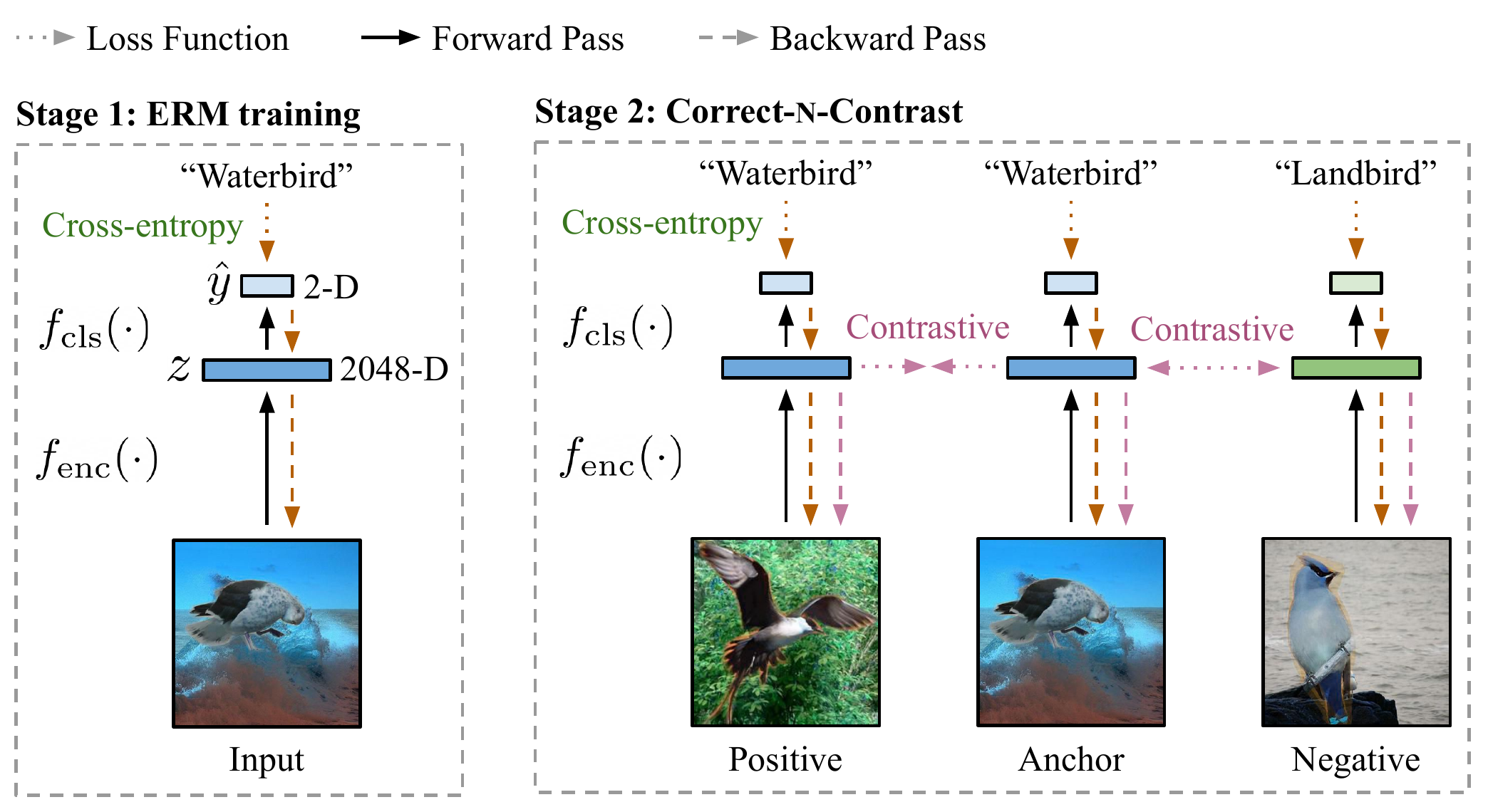}
     \vspace{-0.75cm}
  \caption{The two stages of Correct-\textsc{n}-Contrast. In Stage 1, we train a model with standard ERM and a cross-entropy loss. Then in Stage 2, we train a new model with the same architecture, but specifically learn spurious-attribute-invariant representations with a contrastive loss \eqref{eq:supcon_easy_anchor} and batches of anchors, positives, and negatives sampled with the ERM model's predictions. We also update the full model jointly with a cross-entropy loss on the classifier layer output and the input class labels. Dimensions for ResNet-50 and Waterbirds.}
  \label{fig:method_overview}
\end{figure}

We also note that unlike prior work, we wish to learn invariances between anchors and positives that maximally reduce the presence of features not needed for classification. We thus do not pass the representations through an additional \textit{projection network} \citep{chen2020simple}. Instead, we use Eq.~\ref{eq:supcon_easy_anchor} to compute the supervised contrastive loss directly on the encoder outputs $z = f_\text{enc}(x)$.

\subsection{Two-sided contrastive batch implementation} \label{appendix:additional_algorithm_details_sampling}

We provide more details on our default contrastive batch sampling approach described in Sec.~\ref{sec:method}. To recall, for additional contrastive signal per batch, we can double the pairwise comparisons in a training batch by switching the anchor and positive roles. This is similar to the \textit{NT-Xent} loss in prior contrastive learning work \citep{chen2020simple}. We switch the role of the anchor and first positive sampled in a contrastive batch, and sample additional positives and negatives using the same guidelines but adjusting for the ``new'' anchor. We denote this as ``two-sided'' sampling in contrast with the ``one-sided'' comparisons we get with just the original anchor, positives, and negatives. 

\begin{algorithm}[H]
\caption{Sampling two-sided contrastive batches}
\begin{spacing}{1.25}
\begin{algorithmic}[1]
    \Require Number of positives $M$ and number of negatives $N$ to sample for each batch.
    \State{Initialize set of contrastive batches $B = \{ \}$ \;}
    \For{$x_i \in \{x_i \in X : \hat{y}_i = y_i \}$ \;}
    \State{Sample $M - 1$ additional ``anchors'' to obtain $\{x_i\}_{i=1}^M$ from $\{x_i \in X : \hat{y}_i = y_i\}$ \;}
    \State{Sample $M$ positives $\{x_m^+\}_{m=1}^M$ from $\{x_m^- \in X : \hat{y}_m^- = \hat{y}_i,\; y_m^- \neq y_i\}$\;}
    \State{Sample $N$ negatives $\{x_n^-\}_{n=1}^N$ from $\{x_n^- \in X : \hat{y}_n^- = \hat{y}_i,\; y_n^- \neq y_i\}$\;}
    \State{Sample $N$ negatives $\{x_{n}'^{-}\}_{n=1}^N$ from $\{x_{n}'^{-} \in X : \hat{y}_{n}'^{-} = \hat{y}_1^+, y_{n}'^{-} \neq y_1^+ \}$\;}
    \State{Update contrastive batch set: $B \leftarrow B \cup \Big(\{x_i\}_{i=1}^M, \{x_m^+\}_{m=1}^M, \{x_n^-\}_{n=1}^N, \{x_{n}'^{-}\}_{n=1}^N\Big)$}
    \EndFor
\end{algorithmic}
\end{spacing}
\label{algo:two_sided_contrastive_batch}
\end{algorithm}

Implementing this sampling procedure in practice is simple. First, recall our initial setup with trained ERM model $f_{\hat{\theta}}$, its predictions $\{\hat{y}_i\}_{i=1}^n$ on training data $\{(x_i, y_i)\}_{i=1}^n$ (where $\hat{y}_i = f_{\hat{\theta}}(x_i)$), and number of positives and negatives to sample $M$ and $N$. We then sample batches with Algorithm~\ref{algo:two_sided_contrastive_batch}.

Because the initial anchors are then datapoints that the ERM model gets correct, under our heuristic we infer $\{x_i\}_{i=1}^M$ as samples from the majority group. Similarly the $M$ positives $\{x_m^+\}_{m=1}^M$ and $N$ negatives $\{x_n^-\}_{n=1}^N$ that it gets incorrect are inferred to belong to minority groups. 

For one batch, we then compute the full contrastive loss with
\begin{equation}
    \hat{\mathcal{L}}_\text{con}^\text{sup}(f_\text{enc}) =  \hat{\mathcal{L}}_\text{con}^\text{sup} \left(x_1, \{x_m^+\}_{m=1}^M, \{x_n^-\}_{n=1}^N; f_\text{enc}\right) + \hat{\mathcal{L}}_\text{con}^\text{sup} \left(x_1^+, \{x_i\}_{i=1}^M, \{x_n'^-\}_{n=1}^N; f_\text{enc}\right)
    \label{eq:full_contrastive_loss}
\end{equation}
where $\hat{\mathcal{L}}_\text{con}^\text{sup} \left(x_1, \{x_m^+\}_{m=1}^M, \{x_n^-\}_{n=1}^N; f_\text{enc}\right)$ is given by:
\begin{equation}
    - \frac{1}{M} \sum_{m=1}^M \log \frac{\exp( z_1^{\top} z_m^+ / \tau)}{\sum_{m=1}^M \exp( z_1^{\top} z_m^+ / \tau) + \sum_{n=1}^N \exp( z_1^{\top} z_n^+ / \tau)} 
    \label{eq:supcon_leftside}
\end{equation}

and again let $z$ be the normalized output $f_\text{enc}(x)$ for corresponding $x$. We compute the cross-entropy component of the full loss for each $x$ in the two-sided batch with its corresponding label $y$.

\begin{figure}[H]
  \vspace{-0.25cm}
  \centering
  \includegraphics[width=0.8\textwidth]{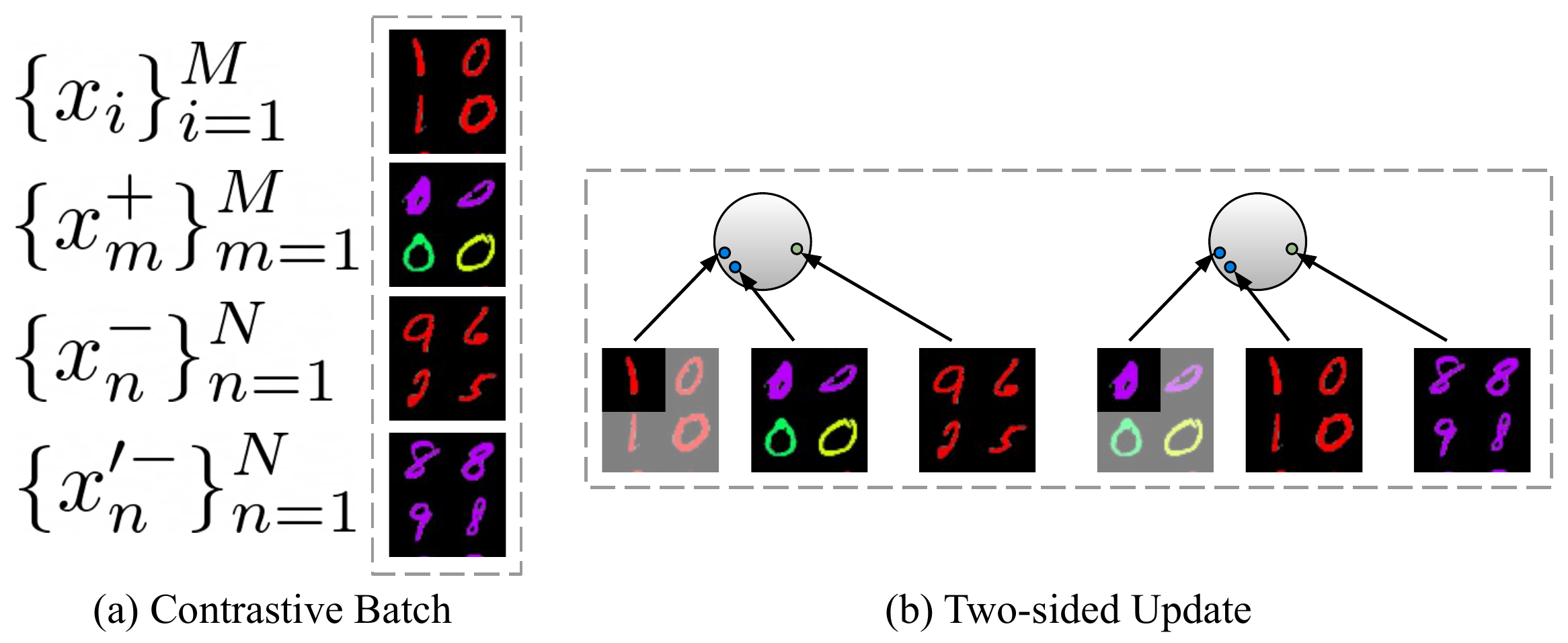}
  \caption{Illustration of two-sided contrastive batch sampling with Colored MNIST as an example. From a single batch (a), we can train a contrastive model with two anchor-positive-negative pairings (b). Aside from increasing the number of ``hard negatives'' for each anchor-positive pair, this intuitively ``pushes'' together anchors and positives from two different directions for greater class separation.}
  \label{fig:erm_prediction_comparison}
\end{figure}

\newpage

\subsection{Summary of \name{} design choices and properties} \label{appendix:additional_algorithm_details_summary}

\textbf{No projection network.} As we wish to learn data representations that maximize the alignment between anchor and positive datapoints, we do not compute the contrastive loss with the outputs of an additional nonlinear projection network. This is inspired by the logic justifying a projection head in prior contrastive learning, e.g. SimCLR \citep{chen2020simple}, where the head is included because the contrastive loss trains representations to be ``invariant to data transformation'' and may encourage removing information ``such as the color or orientation of objects''. In our case, we view inferred datapoints with the same class but different spurious attributes as ``transformations'' of each other, and we hypothesize that removing these differences can help us improve worst-group performance.

\textbf{Two-sided contrastive sampling.} To incorporate additional comparisons between datapoints that only differ in spurious attribute during training, we employ ``two-sided'' contrastive batch sampling. This lets us equally incorporate instances where the second contrastive model in \name{} treats datapoints that the initial ERM model got incorrect and correct as anchors. 

\textbf{Additional intrinsic hard positive/negative mining.} Because the new model corrects for potentially learned spurious correlations by only comparing and contrasting datapoints that differ in class label or spurious attribute, but not both (as dictated by the initial ERM model's outputs), the contrastive batches naturally carry ``hard'' positives and negatives. Thus, our approach provides a natural form of hard negative mining (in addition to the intrinsic hard positive and negative mining at the gradient level with InfoNCE-style losses \citep{chen2020simple, NEURIPS2020_d89a66c7}) while avoiding class collisions, two nontrivial components in self-supervised contrastive learning \citep{Robinson2020ContrastiveLW, Wu2020ConditionalNS, Chuang2020DebiasedCL}.

\textbf{Joint training of encoder and classifier layers.} \name{} can be used to train any standard classification model architecture; for any given neural network we just apply different optimization objectives to the encoder and classifier layers. We train both the encoder and classifier layers with a cross-entropy loss, and jointly train the encoder layer with a supervised contrastive loss. For the encoder layers, we balance the two objectives with a hyperparameter $\lambda$ (c.f. Eq.~\ref{eq:full_loss}).

\section{Omitted Proofs from Section \ref{sec_analysis}}\label{sec_proof_align}

We now prove that within any class, the gap between the worst-group error and the average error can be upper bounded by the alignment loss times the Lipschitz constant, plus another concentration error term.

\begin{proof}[Proof of Theorem \ref{prop_close}]
    Consider two arbitrary groups, denoted by $g_1 = (y, a_1)$ and $g_2 = (y, a_2)$, whose class labels are both $y\in\cY$, whose spurious attributes are $a_1\in\cA$ and $a_2\in \cA$ such that $a_1 \neq a_2$.
    Let $G_1$ and $G_2$ be the subset of training data that belong to groups $g_1$ and $g_2$, respectively.
    We note that both $G_1$ and $G_2$ are non-empty since we have assumed that (in Section \ref{sec:preliminaries}) there is at least one sample from each group in the training data set.
    Let $n_{g_1} = \abs{G_1}$ and $n_{g_2} = \abs{G_2}$ be the size of these two groups, respectively.
    Recall that $f_\text{enc}$ denotes the mapping of the encoder layers of the full neural network model $f_{\theta}$.
    Since the classification layer $f_{\text{cls}}$ is a linear layer, we have used $W$ to denote the weight matrix of this layer.
    Our definition of the cross-group alignment loss in equation \eqref{eq_cross_group}, denoted as $\hat{\cL}_{\alig}(f_{\theta}; y)$, implies that for $g_1$ and $g_2$,
    \begin{align}\label{eq_group_align}
        \frac{1}{n_{g_1}}\frac{1}{n_{g_2}} \sum_{(x, y, a_1)\in G_1}\sum_{(x', y, a_2)\in G_2} \norm{f_\text{enc}(x) - f_\text{enc}(x')}_2 \le \hat{\cL}_{\alig}(f_{\theta}; y).
    \end{align}
    Next, let $\exarg{(x, y, a_1) \sim \cP_{g_1}}{\cL_{\avg}(W f_{\enc}(x), y)}$ be the average loss conditioning on a data point being sampled from group $g_1$ (and similarly for group $g_2$).
    Let $\Delta(g_1, g_2)$ be the difference between the population average losses:
    \begin{align*}
        \Delta(g_1, g_2) = \bigabs{\exarg{(x, y, a_1)\sim \cP_{g_1}}{\cL_{\avg}(W f_\text{enc}(x), y} - \exarg{(x, y, a_2)\sim \cP_{g_2}}{\cL_{\avg}(W f_\text{enc}(x), y)}}.
    \end{align*}
    Recall that $\cG_y \subseteq \cG$ is the set of groups that have class label $y$.
    Since the loss $\ell(\cdot)$ is bounded above by some fixed constant $C_2$ according to our assumption, and is at least zero, by the Hoeffding's inequality, the following result holds with probability at least $1 - \delta$, for all $\abs{\cG_y}$  groups $g\in\cG_y$,
    \begin{align}\label{eq_wg_concentration}
        \bigabs{\exarg{(x,y,a)\sim\cP_{g}}{\cL_{\avg}(W f_\text{enc}(x), y)} - \frac{1}{n_{g}} \sum_{(x, y)\in (X, Y)} \ell(W f_\text{enc}(x), y)} \le C_2 \sqrt{\frac{2\log{(\abs{\cG_y}/\delta)}}{n_{g}}}.
    \end{align}
    Thus, with probability at least $1 - \delta$, the following holds for any $g_1$ and $g_2$ in class $y$ (but having different spurious attributes)
    \begin{align}
        \Delta(g_1, g_2) \le& \bigabs{\frac{1}{n_{g_1}}\sum_{(x, y, a_1)\in G_1} \cL_{\avg}(W f_\text{enc}(x), y) - \frac{1}{n_{g_2}}\sum_{(x', y, a_2)\in G_2} \cL_{\avg}(W f_\text{enc}(x'), y)} \label{eq_emp_delta} \\
        &+ C_2 \bigbrace{\sqrt{\frac{2\log(\abs{\cG_y}/\delta)}{n_{g_1}}} + \sqrt{\frac{2\log(\abs{\cG_y}/\delta)}{n_{g_2}}}}. \nonumber
    \end{align}
    Next, we focus on the RHS of equation \eqref{eq_emp_delta}.
    First, equation \eqref{eq_emp_delta} is also equal to the following:
    \begin{align*}
        \left|\frac{1}{n_{g_1}}\frac{1}{n_{g_2}}\sum_{(x,y,a_1)\in G_1}\sum_{(x',y,a_2)\in G_2} \ell(W f_\text{enc}(x), y))
        - \frac{1}{n_{g_1}}\frac{1}{n_{g_2}}\sum_{(x,y,a_1)\in G_1}\sum_{(x',y,a_2)\in G_2} \ell(W f_\text{enc}(x'), y))\right|. \nonumber
    \end{align*}
    Since we have also assumed that the loss function $\ell(x, y)$ is $C_1$-Lipschitz in $x$\footnote{In other words, we assume that $\abs{\ell(z, y) - \ell(z', y)} \le C_1 \cdot\norm{z - z'}_2$, for any $z, z'$ and $y$.}, the above is at most:
    \begin{align*}
        & \bigabs{\frac{1}{n_{g_1} n_{g_2}} \sum_{(x, y, a_1)\in G_1}\sum_{(x', y, a_2)\in G_2} \bigabs{\ell(W f_{\enc}(x), y) - \ell(W f_{\enc}(x'), y)}} \\
        \le& {\frac{1}{n_{g_1}n_{g_2}} \sum_{(x,y,a_1)\in G_1}\sum_{(x',y,a_2)\in G_2} C_1\cdot \norm{W f_\text{enc}(x) - W f_\text{enc}(x')}_2} \tag{since $y$ is the same for $x, x'$}\\
        \le& \frac{B}{n_{g_1} n_{g_2}} \sum_{(x,y,a_1)\in G_1}\sum_{(x',y,a_2)\in G_2} C_1\cdot \norm{f_\text{enc}(x) - f_\text{enc}(x')}_2 \tag{because $\norm{W}_2 \le B$ as assumed} \\
        \le& B \cdot C_1 \cdot \hat{\cL}_{\alig}(f_{\theta}; y). \tag{because of equation \eqref{eq_group_align}}
    \end{align*}
    Thus, we have shown that for any $g_1$ and $g_2$ within class $y$,
    \begin{align}
        \Delta(g_1, g_2) &\le B \cdot \hat{\cL}_{\alig}(f_{\theta}; y) + \bigbrace{\sqrt{\frac{2\log(\abs{\cG_y}/\delta)}{n_{g_1}}} + \sqrt{\frac{2\log(\abs{\cG_y}/\delta)}{n_{g_2}}}} \nonumber \\
        &\le B \cdot C_1 \cdot \hat{\cL}_{\alig}(f_{\theta}; y) + \max_{g \in \cG_y} C_2 \cdot \sqrt{\frac{8\log(\abs{\cG_y} / \delta)}{n_g} }. \label{eq_delta_bd}
    \end{align}
    Finally, we use the above result to bound the gap between the worst-group loss and the average loss.
    For every group $g\in\cG$, let $p_g$ denote the prior probability of observing a sample from $\cP$ in this group.
    Let $q_y = \sum_{g'\in\cG_y} p_{g'}$.
    Let $h(g)$ be a short hand notation for
    \begin{align*}
        h(g) = \exarg{(x, y, a)\sim\cP_g}{\cL_{\avg}(W f_{\enc}(x), y)}.
    \end{align*}
    The average loss among the groups with class label $y$ is
    $\cL_{\avg}(f_{\theta}; y) = \sum_{g\in\cG_y} \frac{p_g}{q_y} h(g)$.
    The worst-group loss among the groups with class label $y$ is $\cL_{\wg}(f_{\theta}; y) = \max_{g\in\cG_y} h(g)$.
    Let $g^{\star}$ be a group that incurs the highest loss among groups in $\cG_y$.
    We have $\cL_{\wg}(f_{\theta}; y) - \cL_{\avg}(f_{\theta}; y)$ is equal to
    \begin{align}
         h(g^{\star}) - \sum_{g \in \cG_y} \frac{p_g}{q_y} h(g)
        = & \sum_{g\in\cG_y} \frac{p_g}{q_y} (h(g^{\star}) - h(g)) \\
        \le & \sum_{g\in\cG_y} \frac{p_g}{q_y} \Delta(g^{\star}, g) \\
        \le & B\cdot C_1\cdot\hat{\cL}_{\alig}(f_{\theta}; y) + \max_{g\in\cG_y} C_2\cdot \sqrt{\frac{8\log(\abs{\cG}/\delta)}{n_g}}.  \label{eq_thm1_final_eq}
    \end{align}
    The last step uses equation \eqref{eq_delta_bd} on $\Delta(g^{\star}, g)$ and the fact that $q_y = \sum_{g' \in \cG_y} p_{g'}$.
    Thus, we have shown that the gap between the worst-group loss and the average loss among the groups with the same class label is bounded by the above equation.
    The proof is now complete.
\end{proof}

The astute reader will note that Theorem \ref{prop_close} focuses on comparing groups within the same class $y$, for any $y\in\cY$.
A natural follow-up question is what happens when comparing across groups with different labels.
Let $\cL_{\wg}(f_{\theta}) = \max_{y\in\cY} \cL_{\wg}(f_{\theta}; y)$ be the worst-group loss across all the labels.
Recall that $\cL_{\avg}(f_{\theta})$ is the average loss for the entire population of data.
We generalize Theorem \ref{prop_close} to this setting in the following result.

\begin{corollary}[Extension of Theorem \ref{prop_close} to compare across different classes]\label{cor_align}
    In the setting of Theorem \ref{prop_close}, let $q_y = \sum_{g\in\cG_y} p_{g}$ be the prior probability of observing a sample drawn from $\cP$ with label $y$, for any $y \in\cY$.
    We have that with probability at least $1 - \delta$, the following holds:
    \begin{align}
        \cL_{\wg}(f_{\theta}) \le \Big(\min_{y\in\cY} q_y \Big)^{-1} \cL_{\avg}(f_{\theta}) + B\cdot C_1\cdot \max_{y\in\cY} \hat{\cL}_{\alig}(f_{\theta}; y) + \max_{g\in\cG} C_2\cdot\sqrt{\frac{8\log(\abs{\cG} / \delta)}{n_g}}. \label{eq_cor_gen}
    \end{align}
\end{corollary}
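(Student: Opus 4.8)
The plan is to deduce Corollary~\ref{cor_align} from Theorem~\ref{prop_close} by isolating the single class that realizes the overall worst-group loss and then bounding that class's average loss by the global average loss. Let $y^{\star}\in\arg\max_{y\in\cY}\cL_{\wg}(f_{\theta};y)$, so that $\cL_{\wg}(f_{\theta})=\cL_{\wg}(f_{\theta};y^{\star})$ by definition. First I would note that the probability-$(1-\delta)$ event in Theorem~\ref{prop_close} already holds simultaneously for every $y\in\cY$ (the ``for any $y$'' quantifier sits inside the probability statement, and the union bound over all of $\cG$ is in fact built into equation~\eqref{eq_thm1_final_eq}), so instantiating the theorem at $y=y^{\star}$ costs no extra failure probability and yields, on that event,
\begin{align*}
\cL_{\wg}(f_{\theta}) \le \cL_{\avg}(f_{\theta}; y^{\star}) + B\, C_1\, \hat{\cL}_{\alig}(f_{\theta}; y^{\star}) + \max_{g\in\cG_{y^{\star}}} C_2\sqrt{\frac{8\log(\abs{\cG}/\delta)}{n_g}}.
\end{align*}

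The only substantive step is controlling $\cL_{\avg}(f_{\theta};y^{\star})$ by $\cL_{\avg}(f_{\theta})$. Here I would use the decomposition of the population average loss as the $q_y$-weighted mixture of the per-class averages, $\cL_{\avg}(f_{\theta})=\sum_{y\in\cY} q_y\,\cL_{\avg}(f_{\theta};y)$, which is exactly the identity already used in the proof of Theorem~\ref{prop_close} (from $q_y=\sum_{g\in\cG_y}p_g$ and $h(g)$). Since $\ell\ge 0$, every summand is nonnegative, so $\cL_{\avg}(f_{\theta})\ge q_{y^{\star}}\cL_{\avg}(f_{\theta};y^{\star})$, hence $\cL_{\avg}(f_{\theta};y^{\star})\le q_{y^{\star}}^{-1}\cL_{\avg}(f_{\theta})\le(\min_{y\in\cY}q_y)^{-1}\cL_{\avg}(f_{\theta})$.

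Finally I would enlarge the remaining two terms to their class-independent forms by monotonicity: $\hat{\cL}_{\alig}(f_{\theta};y^{\star})\le\max_{y\in\cY}\hat{\cL}_{\alig}(f_{\theta};y)$, and $\max_{g\in\cG_{y^{\star}}}(\cdot)\le\max_{g\in\cG}(\cdot)$ because $\cG_{y^{\star}}\subseteq\cG$. Substituting these three bounds into the displayed inequality gives exactly~\eqref{eq_cor_gen}. I do not expect any real obstacle: the argument is essentially bookkeeping on top of Theorem~\ref{prop_close}. The one point that warrants care is the passage from $\cL_{\avg}(f_{\theta};y^{\star})$ to $\cL_{\avg}(f_{\theta})$ --- this is precisely where the (generally unavoidable) factor $(\min_{y}q_y)^{-1}$ appears, reflecting that the average loss is a weak certificate for the worst group of a rare class --- and one should confirm, as noted above, that re-invoking Theorem~\ref{prop_close} at $y^{\star}$ requires no fresh union bound.
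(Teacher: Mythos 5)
Your proposal is correct and follows essentially the same route as the paper's proof: the paper likewise applies the per-class bound (its equation \eqref{eq_thm1_final_eq}) to the class attaining the maximum, uses nonnegativity of the loss to trade the class-conditional average $\cL_{\avg}(f_{\theta};y)$ for $\big(\min_{y}q_y\big)^{-1}\cL_{\avg}(f_{\theta})$, and enlarges the alignment and concentration terms to maxima over $\cY$ and $\cG$. Your handling of the union bound and the $q_{y^{\star}}^{-1}$ step matches the paper's argument, so there is nothing to add.
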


\begin{proof}
    We generalize the argument in the previous result to compare across different labels.
    The worst-group loss across different labels is
    {\small\begin{align*}
            & \max_{y\in\cY} \max_{g\in\cG_y} h(g) \\
        \le & \max_{y\in\cY} \bigbrace{\sum_{g\in\cG_y} \frac{p_g}{q_y} h(g) + B\cdot C_1 \hat{\cL}_{\alig}(f_{\theta}; y) + \max_{g\in\cG_y} C_2 \sqrt{\frac{8 \log(\abs{\cG_y} / \delta)}{n_g}}} \tag{because of equation \eqref{eq_thm1_final_eq}} \\
        \le & \frac{1}{\min_{y \in \cY} q_y} \sum_{g\in\cG_y} p_g h(g) + B\cdot C_1 \max_{y\in\cY} \hat{\cL}_{\alig}(f_{\theta}; y) + \max_{g\in\cG} C_2 \sqrt{\frac{8 \log(\abs{\cG} / \delta)}{n_g}}. 
    \end{align*}}
    Since $\sum_{g\in\cG} p_g h(g) = \cL_{\avg}(f_{\theta})$, we thus conclude that
    \begin{align*}
        \mathcal{L}_{\text{wg}}(f_{\theta}) \le \Big(\min_{y\in\mathcal{Y}} q_y\Big)^{-1} \mathcal{L}_{\text{avg}}(f_{\theta}) + B \cdot C_1\max_{y\in\cY} \hat{\cL}_{\alig}(f_{\theta}; y) + \max_{g\in\mathcal{G}} C_2\sqrt{\frac{8\log(\abs{\cG} /\delta)}{n_g}}.
    \end{align*}
    The proof is now complete.
\end{proof}

\noindent\textbf{An example showing that Corollary \ref{cor_align} is tight.} We describe a simple example in which the factor $\Big(\min_{y\in\cY} q_y \Big)^{-1}$ in equation \eqref{eq_cor_gen} is tight (asymptotically).
Suppose there are $k$ perfectly balanced classes so that $q_y = 1 / k$, for every $y \in \cY$. There is one data point from each class, with loss equal to 0 for all except one of them. The worst-group loss is 1 whereas the average loss is $1/k$.
Thus, there is a factor of $k$ between the worst-group loss and the average loss.
For equation \eqref{eq_cor_gen}, the factor \[ \Big(\min_{y\in\mathcal{Y}} q_y\Big)^{-1} = k,\]
since $q_y = 1/k$ for every $y\in\cY$ in this example.
Thus, this factor matches the (multiplicative) factor between the worst-group loss and the average loss in this example.

\section{{Additional empirical comparisons and ablations}}\label{appendix:add_benchmarks}
In this section, we include further experiments comparing \name{} against additional related methods and ablations to study the importance of \name{}'s presented design choices. We first consider an alternative representation alignment procedure by minimizing the presented alignment loss directly as opposed to using contrastive learning in Appendix~\ref{appendix:loss_align_direct_comparison}. We next report additional mutual information metrics from our experiments in Section~\ref{sec:results-representation}, where we study the relation between these representation metrics and worst-group accuracy on datasets with increasingly strong spurious correlations. We then summarize and empirically ablate various other design choices for \name{} in Appendix~\ref{appendix:additional_design_choice_ablations}. In Appendix~\ref{appendix:dg_comparison}, we finally compare \name{} against related work in representation learning for unsupervised domain adapation---a setting that similarly involves overcoming group-specific dependencies---once properly adapted for our spurious correlations setting.  

\subsection{{Comparison to minimizing the alignment loss directly}}\label{appendix:loss_align_direct_comparison}
In Sec.~\ref{sec:results-main} and Sec.~\ref{sec:results-representation}, we empirically showed that \name{}'s contrastive loss and hard positive and negative sampling lead to improved worst-group accuracy and greater representation alignment. While the analysis in \citet{wang2020understanding} also discusses how contrastive learning supports alignment between anchor and positives, we also investigate  how \name{} performs if instead of the contrastive loss, we train the Stage 2 model to minimize $\mathcal{L}_\text{align}$ directly as a training objective. With this objective, we aim to minimize the Euclidean distance between samples in different inferred groups but the same class. While \name{} is motivated by improving alignment, we hypothesize that one advantage of the contrastive loss lies in not only aligning anchors and positives, but also \textit{pulling apart} hard negatives, improving class-separability. We keep all other components consistent, and apply $\mathcal{L}_\text{align}$ to the anchor and positive samples in each contrastive batch. We report results on Waterbirds and CelebA in Table~\ref{tab:loss_align_comparison}. 

\begin{table}[h!]
\vspace{-0.25cm}
\caption{Accuracy (\%) and representation metrics ($\times 0.1$) comparing \name{} as proposed vs. $\mathcal{L}_\text{align}$ as the training objective. While the latter results in lower alignment, it does not encourage separating hard negatives from anchors. This results in representations with lower mutual information with class labels and higher mutual information with spurious attributes, and lower worst-group and average accuracies. }
\vspace{0.25cm}
\label{tab:loss_align_comparison}
\centering
{
\tabcolsep=0.5cm
\begin{tabular}{@{}lcbccc@{}}
\toprule
           & \multicolumn{5}{c}{Waterbirds}         \\ \cmidrule(lr){2-6} 
Method          &  Average Acc.    & WG Acc.     & $\mathcal{L}_\text{align}$ [$\downarrow$]    & $\hat{I}(Y; Z)$ [$\uparrow$]    & $\hat{I}(A; Z)$ [$\downarrow$]  \\ \midrule
\name{} ($\mathcal{L}_\text{align}$)  & 
82.3 \scriptsize{(0.1)} & 
88.9 \scriptsize{(0.0)} & 
\textbf{2.16 \scriptsize{(0.01)}} & 
2.52 \scriptsize{(0.06)} & 
3.27 \scriptsize{(0.02)} \\

\name{}                 & 

\textbf{90.9 \scriptsize{(0.1)}} & 
\textbf{88.5 \scriptsize{(0.3)}} & 

6.02 \scriptsize{(0.35)} & 
3.56 \scriptsize{(0.04)} & 
\textbf{0.02 \scriptsize{(0.01)}} \\
\bottomrule
\end{tabular}
}
\end{table}
\begin{table}[h!]
\vspace{-0.4cm}
\label{tab:loss_align_comparison}
\centering
{
\tabcolsep=0.5cm
\begin{tabular}{@{}lcbccc@{}}
\toprule
           & \multicolumn{5}{c}{CelebA}                                       \\ \cmidrule(lr){2-6}  
Method          &  Average Acc.    & WG Acc.     & $\mathcal{L}_\text{align}$ [$\downarrow$]    & $\hat{I}(Y; Z)$ [$\uparrow$]    & $\hat{I}(A; Z)$ [$\downarrow$]  \\ \midrule
\name{} ($\mathcal{L}_\text{align}$)  & 
82.3 \scriptsize{(1.6)} & 
85.9 \scriptsize{(0.8)} & 
\textbf{2.59 \scriptsize{(0.20)}} & 
2.00 \scriptsize{(0.03)} & 
1.59 \scriptsize{(0.30)} \\

\name{}                 & 

\textbf{89.9 \scriptsize{(0.5)}} &
\textbf{88.8 \scriptsize{(0.9)}} & 
 
3.00 \scriptsize{(0.12)} & 
\textbf{2.44 \scriptsize{(0.06)}} & 
\textbf{1.45 \scriptsize{(0.18)}} \\
\bottomrule
\end{tabular}
}
\end{table}

We find that \name{} with the default contrastive loss outperforms \name{} with the alignment loss in both worst-group and average accuracy, and that this indeed corresponds with representations that exhibit lower mutual information with class labels. We reason that the additional pushing apart of hard negative samples (with different class labels but similar spurious features) provides additional signal for improving separation between the different classes. The robust model thus only learns to rely on class-specific features for discriminating between datapoints. However, the $\mathcal{L}_\text{alignment}$ objective does not incorporate these hard negatives. 

\subsection{Extended analysis of representation metrics over increasing spurious correlations}\label{appendix:mutual_info_metrics_increasing_correlations}

In Section~\ref{sec:results-representation}, we found that \name{}'s improved worst-group accuracy on popular spurious correlations benchmarks coincided with representations with lower alignment loss, higher mutual information with ground-truth classes, and lower mutual information with spurious attributes. To study how this relation between representation metrics and worst-group accuracy scaled with the strength of the spurious correlation, we also computed representation metrics over increasingly spurious \cmnist{} datasets for \textsc{CnC}, ERM, and \textsc{Jtt}. Below in Fig.~\ref{fig:representation_effect_extended} we reproduce Fig.~\ref{fig:representation_effect} with added mutual information metrics.

\begin{figure}[H]
  \vspace{-0.25cm}
  \centering
  \includegraphics[width=1\textwidth]{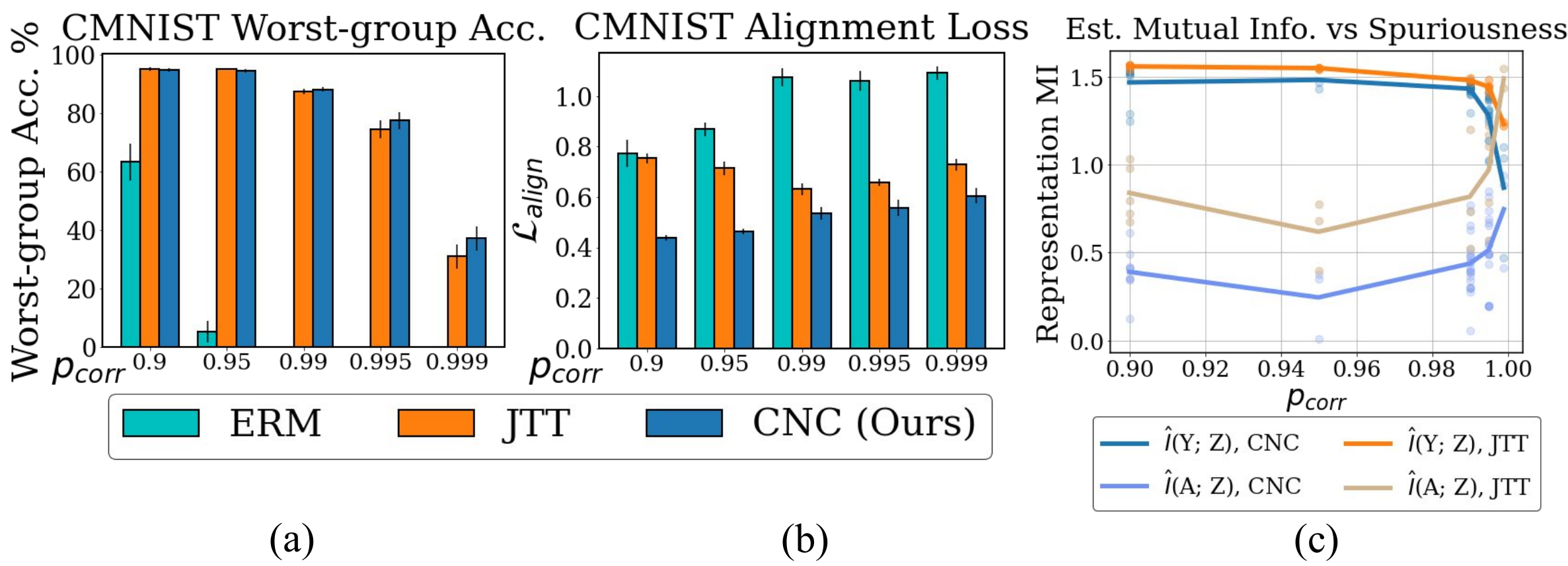}
  \vspace{-0.75cm}
  \caption{Alignment loss and mutual information metrics with worst-group accuracy on increasingly spurious \cmnist. \name{}'s highest worst-group accuracy in spuriously correlated datasets (a) coincides with learning representations with better alignment (b), and a more desirable ratio of mutual information dependence on the ground-truth class labels vs the spurious attribute (c).}
  \label{fig:representation_effect_extended}
  \vspace{-0.5cm}
\end{figure}

Fig.~\ref{fig:representation_effect_extended}(c) shows that \name{}'s learned representations maintain a more favorable balance of mutual information between the class label and spurious attribute than \textsc{Jtt}. While \textsc{Jtt} models exhibit slightly higher estimated $\hat{I}(Y; Z)$ than \name{} models, \name{} models exhibit much lower dependence on the spurious attribute. In the regime where $99.9\%$ of digits are spuriously correlated with a specific color, \name{} uniquely learns representations with higher mutual information on class labels than spurious attributes.

\subsection{Additional design choice ablations}\label{appendix:additional_design_choice_ablations}

To validate the additional algorithmic components of \name{}, we report how \name{} performs on the Waterbirds dataset when modifying the individual design components. We use the same hyperparameters as in the main results, and report accuracies as the average over three training runs for the following ablations. Table~\ref{tab:cnc_ablations} summarizes that across these design ablations, default \name{} as presented consistently outperforms these alternative implementations.

\begin{table}[h]
\centering
\caption{Ablation over \name{} algorithmic components on Waterbirds. Default choices achieve highest worst-group and average accuracy.}
\vspace{0.25cm}
\label{tab:cnc_ablations}

\begin{tabular}{@{}lcccc@{}}
\toprule
Method         & \name{} (Default) & Projection Head & One-sided Contrasting & Train + Finetune \\ \midrule
Worst-group Accuracy (\%)   & \textbf{88.5 (0.3)}                        & 82.4 (1.8)                          & 85.2 (3.6)                                & 84.0 (1.7)                           \\
Average Accuracy (\%) & \textbf{90.9 (0.1)}                        & 88.7 (0.6)                          & 90.1 (1.6)                                & 87.7 (1.1)           \\ \bottomrule
\end{tabular}
\end{table}

\textbf{No projection head.} We incorporate a nonlinear projection head as is typical in prior contrastive learning works \citep{chen2020simple}, that maps the encoder output to lower-dimensional representations (from $2048$ to $128$ in our case). We then update the encoder layers and the projection head jointly by computing the contrastive loss on the projection head's output, still passing the encoder layer's direct outputs to the classifier to compute the cross-entropy loss. We note that using the projection head decreases worst-group accuracy substantially. We reason that as previously discussed, while using the projection head in prior work can allow the model to retain more information in its actual hidden layers \citep{chen2020simple}, in our case to remove dependencies on spurious attributes we actually want to encourage learning invariant representations when we model the differences between anchor and positive datapoints as due to spurious attributes.

\textbf{Two-sided contrastive batches.}  Instead of ``two-sided'' contrasting where we allow both sampled anchors and positives to take on the anchor role, for each batch we only compute contrastive updates by comparing original positives and negatives with the original anchor. When keeping everything else the same, we find that just doing these one-sided comparisons also leads to a drop in performance for worst-group accuracy. This suggests that the increased number of comparisons where we swap the roles of anchors and positives introduces greater contrastive learning signal.  

\textbf{Joint training of encoder and classifier layers.} Instead of training the full model jointly, we first only train the encoder layers with the contrastive loss in \name{}, before freezing these layers and finetuning the classifier layers with the cross-entropy loss. With this implementation, we also obtain noticeable drop in performance. While we leave further analysis for the joint cross-entropy and contrastive optimization for future work, one conjecture is that the cross-entropy loss may aid in learning separable representations while also training the full model to keep the average error small. 

This also follows prior work, where updating the entire model and finetuning all model parameters instead of freezing the encoder layers leads to higher accuracy \citep{chen2020simple}. However, we found that with an initial encoder-only training stage, if we did not freeze the trained layers the fine-tuning on a dataset with spurious correlations would ``revert'' the contrastive training, resulting in a large gap between worst-group and average error similar to ERM.

\textbf{Contrastive loss balancing hyperparameter.} We also ablate the balancing hyperparameter $\lambda$ of \name{} on CMNIST$^*$. In Table~\ref{tab:cnc_ablations_lambda} we find that \name{} consistently achieves high worst-group accuracy across a wide range of $\lambda \in [0.4, 0.9]$. For reference, the next best methods GEORGE and JTT obtain $76.4\%$ and $74.5\%$ worst-group accuracy.

\begin{table}[h]
\centering
\caption{Ablation over \name{} $\lambda$ parameter to balance cross-entropy and contrastive loss components on CMNIST$^*$. \name{} obtains high performance across a range of $\lambda$.}
\vspace{0.25cm}
\label{tab:cnc_ablations_lambda}
{
\tabcolsep=0.5cm
\begin{tabular}{@{}lccccc@{}}
\toprule
\name{} $\lambda$ & 0.2        & 0.4        & 0.6        & 0.8        & 0.9        \\ \midrule
Robust Accuracy        & 70.4 (2.9) & 74.2 (2.6) & 75.3 (1.7) & 77.4 (2.5) & 75.8 (1.2) \\
Average Accuracy       & 89.0 (0.1) & 88.0 (0.7) & 88.3 (0.6) & 89.9 (0.4) & 88.4 (0.1) \\ \bottomrule
\end{tabular}
}
\end{table}

\subsection{{Comparison to domain generalization representation learning methods}}\label{appendix:dg_comparison}
While our main results in Table~\ref{tab:main_results} compare against methods designed to tackle the spurious correlations setting presented in Section~\ref{sec:results-main}, we also note that \name{} bears some similarity to methods proposed for unsupervised domain adaptation (UDA). At a high level, a popular approach is to learn similar representations for datapoints with the same class but sampled from different domains, e.g., via adversarial training to prevent another model from classifying representations' source domains correctly \citep{ganin2016domain}, or minimizing representation differences via metrics such as \textit{maximum mean discrepancy} (MMD) \citep{li2018domain}, to generalize to a desired target domain. While UDA carries distinct problem settings and assumptions from our spurious correlations setting (c.f. Appendix~\ref{sec:da_related_work}), we see if UDA methods that also try to optimize a model's representations can train models robust to spurious correlations, and compare their performance with \name{}. We first explain our protocol for fair evaluation. We then discuss results reported in Table~\ref{tab:da_dg_comparison}. 

We carry out our evaluation with domain-adversarial neural networks (DANN) \cite{ganin2016domain}, a seminal DG method that aims to learn aligned representations across two domains. To do so, DANN jointly trains a model to classify samples from a ``source'' domain while preventing a separate ``domain classifier'' module from correctly classifying the domain for datapoints sampled from both domains. For fair comparison, we use the same ResNet-50 backbone as in \name{}, and make several adjustments to the typical DANN and UDA procedure: 
\begin{enumerate}
    \item While UDA assumes that the data is organized into ``source'' and ``target'' domains, we do not have domain labels. We thus infer domains using the predictions of an initial ERM model as in \name{}.
    \item The notion of a domain may also be ambiguous with respect to the groups defined in Section~\ref{sec:preliminaries}. For example, domains may be defined by spurious attributes (e.g., for the Waterbirds dataset, we may consider the ``water background'' domain and the ``land background'' domain). Domains may alternatively be defined by whether samples carry dominant spurious correlations or not (e.g., the ``majority group'' domain and the ``minority group'' domain). We train and evaluate separate DANN models for both interpretations. We infer the former by the predicted class of the initial ERM model. We infer the latter by whether the initial ERM model is correct or not. 
    \item Finally, UDA aims to train with a class-labeled ``source'' domain and an unlabeled ``target'' domain such that a model performs well on unseen samples from the specified ``target'' domain \citep{ganin2016domain}. However, our benchmarks have class labels for \textit{all} training points, and do not have a notion of ``source'' and ``target'' domains (we aim to obtain high worst-group accuracy, which could fall under any domain). We thus assume access to labels for all domains. During training, the goal for our DANN models is to correctly classify samples from both domains, while learning representations such that a jointly trained domain classifier module cannot determine the samples' domains from their representations alone. At test-time, we evaluate the DANN model on the entire test set for each benchmark, and report the worst-group and average accuracies. 
\end{enumerate} 

In Table~\ref{tab:da_dg_comparison}, we report the worst-group and average accuracies of DANN on the Waterbirds and CelebA datasets across three seeds along with the \name{} results. Our results suggest that the domain alignment in DANN is not sufficient to improve worst-group accuracy. We hypothesize this is due to adversarial training with the domain classifier aligning representations without regard to different classes within each domain. Due to the propensity of samples exhibiting spurious correlations, DANN models may thus still learn to rely on these correlations.  

\begin{table}[H]

\centering

{
\caption{\name{} achieves higher worst-group and average accuracies on spuriously correlated benchmarks than DANN, a prior representation alignment method designed for domain adaptation}
\vspace{0.25cm}
\label{tab:da_dg_comparison}
\begin{tabular}{@{}lcccc@{}}

\toprule
Method                                       & \multicolumn{2}{c}{Waterbirds} & \multicolumn{2}{c}{CelebA} \\
Accuracy (\%)                                & Worst-group    & Average       & Worst-group  & Average     \\ \midrule
DANN (domains by spurious attribute)         & 37.4 (3.8)     & 87.6 (2.2)    & 28.1 (3.1)   & 94.6 (0.3)  \\
DANN (domains by majority vs minority group) & 67.3 (0.8)     & 83.6 (0.2)    & 47.2 (3.1)   & 88.7 (1.8)  \\
\name{}                     & \textbf{88.5 (0.3)} & \textbf{90.9 (0.1)} & \textbf{88.8 (0.9)} & \textbf{89.9 (0.5)}  \\ \bottomrule
\end{tabular}
}
\end{table}

\section{Further related work discussion}\label{sec:extended_related_work}
We provide additional discussion of related work and connections to our work below.

\subsection{Improving robustness to spurious correlations}\label{app:related_work_robustness_spurious}
Our core objective is to improve model robustness to group or subpopulation distribution shifts that arise from the presence of spurious correlations, specifically for classification tasks. Because these learnable correlations hold for some but not all samples in a dataset, standard training with ERM may result in highly variable performance: a model that classifies datapoints based on spurious correlations does well for some subsets or ``groups'' of the data but not others. To improve model robustness and avoid learning spurious correlations, prior work introduces the goal to maximize worst-group accuracy \citep{sagawa2019distributionally}. Related works broadly fall under two categories: 

\textbf{Improving robustness with group information.} If information such as spurious attribute labels is provided, one can divide the data into explicit groups as defined in Sec.~\ref{sec:preliminaries}, and then train to directly minimize the worst group-level error among these groups. This is done in group DRO (GDRO) \citep{sagawa2019distributionally}, where the authors propose an online training algorithm that focuses training updates over datapoints from higher-loss groups. \citet{goel2020model} also adopt this approach with their method CycleGAN Augmented Model Patching (CAMEL). However, similar to our motivation, they argue that a stronger modeling goal should be placed on preventing a model from learning group-specific features. Their approach involves first training a CycleCAN \citep{zhu2017unpaired} to learn the data transformations from datapoints in one group to another that share the same class label. They then apply these transformations as data augmentations to different samples, intuitively generating new versions of the original samples that take on group-specific features. Finally they train a new model with a consistency regularization objective to learn invariant features between transformed samples and their sources. Unlike their consistency loss, we accomplish a similar objective to learn group-invariant features with contrastive learning. Our first training stage is also less expensive. Instead of training a CycleGAN and then using it to augment datapoints, we train a relatively simple standard ERM classification model, sometimes with only a few number of epochs, and use its predictions to identify pairs of datapoints to serve a similar purpose. Finally, unlike both CAMEL and GDRO, we do not require spurious attribute or group labels for each training datapoints. We can then apply \name{} in less restrictive settings where such information is not known.

Related to GDRO are methods that aim to optimize a ``Pareto-fair'' objective, more general than simply the worst-case group performance. Notable examples are the works of \citet{balashankar2019what} and \citet{martinez2020minimax}. However, these approaches similarly do not directly optimize for good representation alignment, which is a focus of our work.

\textbf{Improving robustness without training group information.} More similar to our approach are methods that do not assume group information at training time, and only require validation set spurious attribute labels for fine-tuning. As validation sets are typically much smaller in size than training sets, 
an advantage of \name{} and comparable methods is that we can improve the accessibility of robust training methods to a wider set of problems. One popular line of work is distributionally robust optimization (DRO), which trains models to minimize the worst loss within a ball centered around the observed distribution \citep{ben2013robust, wiesemann2014distributionally, duchi2019variance, NEURIPS2020_64986d86, NEURIPS2020_0b6ace9e, oren2019distributionally}.  However, prior work has shown that these approaches may be too pessimistic, optimizing not just for worst-group accuracy but worst possible accuracy within the distribution balls \citep{sagawa2019distributionally}, or too undirected, optimizing for too many subpopulations, e.g. by first upweighting minority points but then upweighting majority points in later stages of training \citep{liu2021just}. \citet{pezeshki2020gradient} instead suggest that \textit{gradient starvation} (GS), where neural networks only learn to capture statistically dominant features in the data \citep{combes2018learning}, is the main culprit behind learning spurious correlations, and introduce a ``spectral decoupling'' regularizer to alleviate GS. However this does not directly prevent models from learning dependencies on spurious attributes. Similar to CAMEL, \cite{taghanaki2021robust} propose Contrastive Input Morphing (CIM), an image dataset-specific method that aims to learn input feature transformations that remove the effects of spurious or task-irrelevant attributes. They do so without group labels, training a transformation network with a triplet loss to transform input images such that a given transformed image's \emph{structural similarity metric} (based on luminance, contrast, and structure \citep{wang2003multiscale}) is more similar to a ``positive'' image from the same class than a ``negative'' image from a different class. They then train a classifier on top of these representations. Instead of pixel-level similarity metrics, \name{} enforces similarity in a neural network's hidden-layer representations, allowing \name{} to apply to non-image modalities. Additionally, we sample positives and negatives not just based on class label, but also the learned spurious correlations of an ERM model (via its trained predictions). We hypothesize that our sampling scheme, which intuitively provides "harder" positive and negative examples, allows \name{} to more strongly overcome spurious correlations.

Most similar to our approach are methods that first train an initial ERM model with the class labels as a way to identify data points belonging to minority groups, and subsequently train an additional model with greater emphasis on the estimated minority groups. \citet{NEURIPS2020_e0688d13} demonstrate that even when only trained on the class labels, neural networks learn feature representations that can be clustered into groups of data exhibiting different spurious attributes. They use the resulting cluster labels as estimated group labels before running GDRO on these estimated groups. Meanwhile, \citet{NEURIPS2020_eddc3427} train a pair of models, where one model minimizes a generalized cross-entropy loss \citep{NEURIPS2018_f2925f97}, such that the datapoints this model classifies incorrectly largely correspond to those in the minority group. They then train the other model on the same data but upweight the minority-group-estimated points. While they interweave training of the biased and robust model, \citet{liu2021just} instead train one model first with a shortened training time (but the standard cross-entropy objective), and show that then upsampling the incorrect data points and training another model with ERM can yield higher worst-group accuracy. \citet{creager2021environment} propose Environment Inference for Invariant Learning (EIIL), which first trains an ERM model, and then softly assign the training data into groups
under which the initial trained ERM model would maximally violate the invariant risk minimization (IRM) objective. In particular, the IRM objective is maximally satisfied if a model's optimal classifier is the same across groups \citep{arjovsky2019invariant}, and EIIL groups are inferred such that the initial ERM model's representations exhibit maximum variance within each group. \citet{creager2021environment} then runs GDRO with these groups.
Finally, \citet{nagarajan2020understanding} provides a theoretical understanding of how ERM picks up spurious features under data set imbalance.
They consider a setting involve a single spurious feature that is correlated with the class label and analyze the max-margin classifier in the presence of this spurious feature.

In our work, we demonstrate that the ERM model's predictions can be leveraged to not only estimate groups and train a new model with supervised learning but with different weightings. Instead, we can specifically identify pairs of points that a contrastive model can then learn invariant features between. Our core contribution comes from rethinking the objective with a contrastive loss that more directly reduces the model's ability to learning spurious correlations.

\subsection{Contrastive learning} Our method also uses contrastive learning, a simple yet powerful framework for both self-supervised \citep{chen2020simple, oord2018representation, tian2019contrastive, NEURIPS2020_5cd5058b, Sermanet2018TimeContrastiveNS, hassani2020contrastive, Robinson2020ContrastiveLW} and supervised \citep{NEURIPS2020_d89a66c7, gunel2020supervised} representation learning. The core idea is to learn data representations that maximize the similarity between a given input ``anchor'' and distinct different views of the same input (``positives''). Frequently this also involves \textit{contrasting} positives with ``negative'' data samples without any assumed relation to the anchor \citep{NEURIPS2019_ddf35421}. Core components then include some way to source multiple views, e.g. with data transformations \citep{chen2020simple}, and training objectives similar to noise contrastive estimation \citep{Gutmann2010NoisecontrastiveEA, Mnih2013LearningWE}.

An important component of contrastive learning is the method by which appropriate positives and negatives are gathered. For sampling positives, \citet{chen2020simple} show that certain data augmentations (e.g. crops and cutouts) may be more beneficial than others (e.g. Gaussian noise and Sobel filtering) when generating anchors and positives for unsupervised contrastive learning. \cite{von2021self} theoretically study how data augmentations help contrastive models learn core content attributes which are invariant to different observed ``style changes''. They propose a latent variable model for self-supervised learning. \citet{tian2020makes} further study what makes good views for contrastive learning. They propose an ``InfoMin principle'', where anchors and positives should share the least information necessary for the contrastive model to do well on the downstream task. For sampling negatives, \citet{Robinson2020ContrastiveLW} show that contrastive learning also benefits from using ``hard'' negatives, which (1) are actually a different class from the anchor (which they approximate in the unsupervised setting) and (2) embed closest to the anchor under the encoder's current data representation. Both of these approaches capture the principle that if positives are always too similar to the anchor and negatives are always too different, then contrastive learning may be inefficient at learning generalizable representations of the underlying classes. 

In our work, we incorporate this principle by sampling data points with the same class label but different ERM predictions\textendash presumably because of spurious attribute differences\textendash as anchor and positive views, while sampling negatives from data points with different class labels but the same ERM prediction as the anchor. The anchors and positives are different enough that a trained ERM model predicted them differently, while the anchors and negatives are similar enough that the trained ERM model predicted them the same. Contrasting the above then allows us to exploit both ``hard'' positive and negative criteria for our downstream classification task. In Section~\ref{sec:results-ablation}, we show that removing this ERM-guided sampling (i.e. only sampling positives and negatives based on class information), as well as trying different negative sampling procedures, leads to substantially lower worst-group accuracy with \name{}. 

One limitation of our current theoretical analysis regarding the alignment loss (cf. Section \ref{sec_analysis}) is that we require knowing the group labels to compute the RHS of equation \eqref{eq_result_align} (in particular, the alignment loss). An interesting question for future work is to provide a better theoretical understanding of the alignment induced by \name{} in the context of spurious correlations.

\subsection{Invariant learning}\label{app:invariant_learning}

Our work also shares some similarities in motivation with Invariant Risk Minimization (IRM) \citep{arjovsky2019invariant}, Predictive Group Invariance (PGI) \citep{Ahmed2021SystematicGW}, and other related works in domain-invariant learning \citep{krueger2020out, parascandolo2020learning, ahuja2020invariant, creager2021environment}. These methods aim to train models that learn a single invariant representation that is consistently optimal (e.g. with respect to classifying data) across different domains or environments. These environments can be thought of as data groups, and while traditionally methods such as IRM require that environment labels are known, recent approaches such as Environment Inference for Invariant Learning (EIIL) \citep{creager2021environment} and Predictive Group Invariance (PGI) \citep{Ahmed2021SystematicGW} similarly aim to infer environments with an initial ERM model. As discussed in Appendix~\ref{app:related_work_robustness_spurious}, in EIIL, \citet{creager2021environment} next train a more robust model with an invariant learning objective, similarly selecting models based on the worst-group error on the validation set. However, they train this model using IRM for Colored MNIST and GDRO for Waterbirds and CivilComments-WILDS, using the inferred environments as group labels. PGI uses EIIL to infer environments, but trains a more robust model by minimizing the KL divergence of the predicted probabilities for samples in the same class, but different groups, using the inferred environments as group labels. Thus, these approaches may similarly seek to learn similar representations for samples across groups, but do so via GDRO's reweighting, or the outputs of the model's classification layer. In contrast, \name{} trains a more robust model by using contrastive learning to affect its representations more directly. In our main evaluation (c.f. Section~\ref{sec:results-main}) we show that \name{}'s proposed contrastive loss objective and hard sampling strategy lead to higher worst-group accuracy.

\subsection{{Learning representations for unsupervised domain adaptation}}\label{sec:da_related_work}
\name{}'s approach to improve model robustness via a model's hidden-layer representations bears similarity to some prior unsupervised domain adaptation (UDA) methods. In UDA, the goal is to use the source and target features, and the source labels, to transfer to the specific target domain. As introduced in Appendix~\ref{appendix:dg_comparison}, to generalize beyond a single domain, one promising direction is to learn similar representations for samples from different domains. However, UDA methods assume knowledge of training sample domains or spurious attributes, whereas \name{} and our other comparable methods do not. UDA methods also assume a fundamentally different data setup; the training data is divided into source and target domains, and only the source domain has labels. In our setting, we \textit{do} have class labels for all samples available during training, but \textit{do not} have any natural definition of source and target domains (and thus no training data domain labels). Applying UDA methods requires additional reintepretation of this setup.

Considering methods to learn desirable representations, one popular UDA approach is domain adversarial neural networks (DANN) \cite{ganin2016domain}. DANN accomplishes alignment by adversarially training a model’s encoder layers (the ``feature extractor'') to learn representations such that a separate domain classifier module cannot distinguish samples' domains from the learned representations. To preserve class information, they train a classifier module on top of the feature extractor jointly with a cross-entropy loss. \name{}'s process for aligning representations is more simple. We do not rely on training separate modules with conflicting objectives to accomplish alignment; instead the supervised contrastive loss with \name{}'s sampling procedure encourages learning representations that are both separable across classes and aligned within each class. We thus avoid additional training parameters and optimization issues associated with minimax-based adversarial training \citep{arjovsky2017towards}. Instead of relying on a the domain classifier's output, we can train single model to align representations by minimizing the cosine similarity between anchor and positive samples. 

\subsection{{Learning representations for domain generalization}}\label{sec:dg_related_work}

\name{} also bears some similarity to representation learning methods for domain generalization (DG), which also try to learn representations that generalize beyond individual groups. However, in contrast to the spurious correlations problem, DG settings often assume access to multiple ``source'' domains and knowledge (labels) of which group or domain each sample belongs to. They aim to generalize to a specific unseen ``target'' domain not present during training. Unlike the spurious correlations presented in our evaluated datasets, class distributions within domains are also not canonically as skewed in standard benchmarks such as Office \citep{saenko2010adapting} and VLCS \citep{fang2013unbiased}. The distribution shift presented in these benchmarks is also distinct from the shift encountered with spurious correlations benchmarks. With spuriously correlated data, a model may learn to rely on spurious features themselves to obtain high average classification accuracy on the training set (e.g., solely relying on background features to classify bird type). At test time, the model may make confident predictions solely based on the absence or presence of these spurious features. In contrast, in DG (and UDA) settings, the domain-specific features themselves do not correlate with  classes. A model may learn to rely on domain-specific ``style'' features \textit{in conjunction} with other ``content-based'' features to classify images during training. The absence of the former style features in new domains may decrease performance, but not because the model has latched on specific spurious features to dictate its predictions. The two tasks thus differ via different model behaviors and failure modes. 

Methods to tackle DG via representation alignment include the invariant learning methods discussed in Appendix~\ref{app:invariant_learning}. Additionally, \cite{li2018domain} propose \textit{maximum mean discrepancy} MMD adversarial autoencoder (MMD-AAE). To align representations between domains, MMD-AAE (1) trains an autoencoder, (2) uses MMD maximum applied at the bottleneck hidden-layer to match representations across domains, and (3) applies an adversarial discriminator network to match these representations with a Laplace distribution (to encourage more sparse hidden representations). This matching is also not conditioned on the sample classes; an additional classifier head is applied to preserve class-specific information. \name{} is also simpler than MMD-AAE, only using the normalized dot product of a single classifier’s last hidden-layer representations. Via contrastive learning, \name{} also critically also aims to only align representations with the same class but different ERM-inferred groups, while pushing apart samples with different classes but the same ERM-inferred groups. By paying attention to the classes, we directly encourage a model to ignore group-specific information which confused the initial ERM model but that does not discriminate between classes.

\section{Additional experimental details}\label{appendix:experimental_details}
We first further describe our evaluation benchmarks in Appendix~\ref{appendix:dataset_details}. We next provide further details on how we calculate the reported metrics and the experimental hyperparameters of our main results in Appendix~\ref{appendix:dataset_details}. For all methods, following prior work \citep{liu2021just, NEURIPS2020_e0688d13, NEURIPS2020_eddc3427, sagawa2019distributionally, creager2021environment} we report the test set worst-group and average accuracies from models selected through hyperparameter tuning for the best validation set worst-group accuracy. While different methods have different numbers of tunable hyperparameters, we try to keep the number of validation queries as close as possible while tuning for fair comparison.

\subsection{Dataset details}\label{appendix:dataset_details}

\textbf{Colored MNIST (\cmnist).} We evaluate with a version of the Colored MNIST dataset proposed in \citet{arjovsky2019invariant}. The goal is to classify MNIST digits belonging to one of $5$ classes $\mathcal{Y} = \{$(0, 1), (2, 3), (4, 5), (6, 7), (8, 9)$\}$, and treat color as the spurious attribute. In the training data, we color $p_\text{corr}$ of each class's datapoints with an associated color $a$, and color the rest randomly. If $p_{\text{corr}}$ is high, trained ERM models fail to classify digits that are not the associated color. We pick $a$ from uniformly interspersed intervals of the \texttt{hsv} colormap, e.g. $0$ and $1$ digits may be spurious correlated with the color red (\texttt{\#ff0000}), while $8$ and $9$ digits may be spuriously correlated with purple (\texttt{\#ff0018}). The full set of colors in class order are $\mathcal{A} = \{\texttt{\#ff0000}, \texttt{\#85ff00}, \texttt{\#00fff3}, \texttt{\#6e00ff}, \texttt{\#ff0018}\}$ (see Fig.~\ref{fig:erm_motivation}). For validation and test data, we color each datapoint randomly with a color $a \in \mathcal{A}$. We use the default test set from MNIST, and allocate 80\%-20\% of the default MNIST training set to the training and validation sets. For main results, we set $p_\text{corr} = 0.995$.

\textbf{Waterbirds.} We evaluate with the Waterbirds dataset, which was introduced as a standard spurious correlations benchmark in \citet{sagawa2019distributionally}. In this dataset, masked out images of birds from the CUB dataset \citep{wah2011caltech} are pasted on backgrounds from the Places dataset \citep{zhou2017places}. Bird images are labeled either as waterbirds or landbirds; background either depicts water or land. From CUB, waterbirds consist of seabirds (ablatross, auklet, cormorant, frigatebird, fulmar, gull, jaeger, kittiwake, pelican, puffin, tern) and waterfowl (gadwell, grebe, mallard, merganser, guillemot, Pacific loon). All other birds are landbirds. From Places, water backgrounds consist of ocean and natural lake classes, while land backgrounds consist of bamboo forest and broadleaf forest classes.

The goal is to classify the foreground bird as $\mathcal{Y} = \{$waterbird, landbird$\}$, where there is spurious background attribute $\mathcal{A} = \{$water background, land background$\}$. We use the default training, validation, and test splits \citep{sagawa2019distributionally}, where in the training data 95\% of waterbirds appear with water backgrounds and 95\% of landbirds appear with land backgrounds. Trained ERM models then have trouble classifying waterbirds with land backgrounds and landbirds with water backgrounds. For validation and test sets, water and land backgrounds are evenly split among landbirds and waterbirds. 

\textbf{CelebA.} We evaluate with the CelebA spurious correlations benchmark introduced in \citet{sagawa2019distributionally}. The goal is to classify celebrities' hair color $\mathcal{Y} = \{$blond, not blond$\}$, which is spuriously correlated with the celebrity's identified gender $\mathcal{A} = \{$male, female$\}$. We use the same training, validation, test splits as in \citet{sagawa2019distributionally}. Only 6\% of blond celebrities are male; trained ERM models perform poorly on this group.

\textbf{CivilComments-WILDS.} We evaluate with the CivilComments-WILDS dataset from \citet{koh2021wilds}, derived from the Jigsaw dataset from \citet{borkan2019nuanced}. Each datapoint is a real online comment curated from the Civil Comments platform, a commenting plugin for independent news sites. For classes, each comment is labeled as either toxic or not toxic. For spurious attributes, each comment is also labeled with the demographic identities $\{$male, female, LGBTQ, Christian, Muslim, other religions, Black, White$\}$ mentioned; multiple identities may be mentioned per comment. 

The goal is to classify the comment $\mathcal{Y} = \{$toxic, not toxic$\}$. As in \citet{koh2021wilds}, we evaluate with $\mathcal{A} = \{$male, female, LGBTQ, Christian, Muslim, other religions, Black, White$\}$. There are then 16 total groups corresponding to (toxic, identity) and (not toxic, identity) for each identity. Groups may overlap; a datapoint falls in a group if it mentions the identity. We use the default data splits \citep{koh2021wilds}. In Table~\ref{tab:civilcomments_wilds_subgroups}, we list the percentage of toxic comments for each identity based on the groups. Trained ERM models in particular perform less well on the rarer toxic groups.

\begin{table}[h]
\centering
\caption{Percent of toxic comments for each identity in the CivilComments-WILDS training set.}
\vspace{0.25cm}
\label{tab:civilcomments_wilds_subgroups}
\begin{tabular}{@{}lcccccccc@{}}
\toprule
Identity & male & female & LGBTQ & Christian & Muslim & other religions & Black & White \\ \midrule
\% toxic & 14.9 & 13.7   & 26.9  & 9.1       & 22.4   & 15.3            & 31.4  & 28.0    \\ \bottomrule
\end{tabular}
\end{table}

\subsection{Implementation details}\label{appendix:experimental_details_methods}

\subsubsection{Reported metrics}\label{appendix:experimental_details_methods_metrics}

\textbf{Main results.}
For the \cmnist, Waterbirds, and CelebA datasets, we run \name{} with three different seeds, and report the average worst-group accuracy over these three trials in Table~\ref{tab:main_results}. As we use the same baselines and comparable methods as \citet{liu2021just}, we referenced their main results for the reported numbers, which did not have standard deviations or error bars reported. For CivilComments-WILDS, due to time and compute constraints we only reported one run.
{We note that \cmnist{} here is extremely challenging, as minority groups together only make up $0.5\%$ of the training set.} This severe imbalance explains the very poor worst-group performance of ERM (and other methods that fail to sufficiently remediate the issue).

\textbf{Estimated mutual information.} 
We give further details for calculating the representation metric introduced in Sec.~\ref{sec:erm_observations}. As a reminder, we report both alignment and estimated mutual information metrics to quantify how dependent a model's learned representations are on the class labels versus the spurious attributes, and compute both metrics on the representations $Z= \{f_\text{enc}(x)\}$ over all test set data points $x$. Then to supplement the alignment loss calculation in Sec.~\ref{sec:erm_observations}, we also estimate $I(Y; Z)$ and $I(A; Z)$, the mutual information between the model's data representations and the class labels and spurious attribute labels respectively.

To first estimate mutual information with $Y$, we first approximate $p(y \mid z)$ by fitting a multinomial logistic regression model over all representations $Z$ to classify $y$. With the empirical class label distribution $p(y)$, we compute:
\begin{equation}
    \hat{I}(Y; Z) =
    \frac{1}{|Z|} \sum_{z \in Z} \sum_{y \in Y} p(y \mid z) \log \frac{p(y \mid z)}{p(y)}
    \label{eq:estimated_mi}
\end{equation}

We do the same but substitute the spurious attributes $a$ for $y$ to compute $\hat{I}(A; Z)$.

\subsubsection{Stage 1 ERM training details} \label{appendix:experimental_details_methods_erm_details}
We describe the model selection criterion, architecture, and training hyperparameters for the initial ERM model in our method. To select this model, recall that we first train an ERM model to predict the class labels, as the model may also learn dependencies on the spurious attributes. Because we then use the model's predictions on the training data to infer samples with different spurious attribute values but the same class label, we prefer an initial ERM model that better learns this spurious dependency, and importantly also does not overfit to the training data. Inspired by the results in prior work \citep{NEURIPS2020_e0688d13, liu2021just}, we then explored using either a standard ERM model, one with high $\ell$-2 regularization (\texttt{weight decay = 1}), or one only trained on a few number of epochs. To select among these, because the validation data has both class labels and spurious attributes, we choose the model with the largest gap between worst-group and average accuracy on the validation set. \MZ{For fair comparison to JTT, we use the same batch size, learning rate, momentum, optimizer, default weight decay, and number of epochs as reported in \cite{liu2021just} to obtain these models}. We detail the ERM architecture and hyperparameters for each dataset below:

\textbf{Colored MNIST.} We use the LeNet-5 CNN architecture in the \texttt{pytorch} image classification tutorial. We train with SGD, few epochs $E = 5$, SGD, learning rate 1e-3, batch size 32, default weight decay 5e-4, and momentum 0.9.

\textbf{Waterbirds.} We use the \texttt{torchvision} implementation of ResNet-50 with pretrained weights from ImageNet as in \citet{sagawa2019distributionally}. Also as in \citep{sagawa2019distributionally}, we train with SGD, default epochs $E = 300$, learning rate 1e-3, batch size 128, and momentum 0.9. However we use high weight decay $1.0$.

\textbf{CelebA.} We also use the \texttt{torchvision} ImageNet-pretrained ResNet-50 and default hyperparameters from \citet{sagawa2019distributionally} but with high weight decay: we train with SGD, default epochs $E = 50$, learning rate 1e-4, batch size 128, momentum $0.9$, and high weight decay $0.1$.

\textbf{CivilComments-WILDS.} We use the HuggingFace (\texttt{pytorch-transformers}) implementation of BERT with pretrained weights and number of tokens capped at 300 as in \citet{koh2021wilds}. As in \citet{liu2021just}, with other hyperparameters set to their defaults \citep{koh2021wilds} we tune between using the AdamW optimizer with learning rate 1e-5 and SGD with learning rate 1e-5, momentum 0.9, and the PyTorch \texttt{ReduceLROnPlateau} learning rate scheduler. Based on our criterion, we use SGD, few number of epochs $E = 2$, learning rate 1e-5, batch size 16, default weight decay 1e-2, and momentum $0.9$.

\subsubsection{Contrastive batch sampling details} \label{appendix:experimental_details_methods_contrastive_batch_details}
We provide further details related to collecting predictions from the trained ERM models, 
and the number of positives and negatives that determine the contrastive batch size. 

\textbf{ERM model prediction.} To collect trained ERM model predictions on the training data, we explored two approaches: (1) using the actual predictions, i.e. the argmax for each classifier layer output vector, and (2) clustering the representations, or the last hidden-layer outputs, and assigning a cluster-specific label to each data point in one cluster. This latter approach is inspired by \citet{NEURIPS2020_e0688d13}, and we similarly note that ERM models trained to predict class labels in spuriously correlated data may learn data representations that are clusterable by spurious attribute. As a viable alternative to collecting the ``actual'' predictions of the trained ERM model on the training data, with $C$ classes, we can then cluster these representations into $C$ clusters, assign the same class label only to each data point in the same cluster, and choose the label-cluster assignment that leads to the highest accuracy on the training data. We also follow their procedure to first apply UMAP dimensionality reduction to 2 UMAP components, before clustering with K-means or GMM \citep{NEURIPS2020_e0688d13}. To choose between all approaches, we selected the procedure that lead to highest worst-group accuracy on the validation data after the second-stage of training. While this cluster-based prediction approach was chosen as a computationally efficient heuristic, we found that in practice it either lead to comparable or better final worst-group accuracy on the validation set. To better understand this, as a preliminary result we found that when visualizing the validation set predictions with the Waterbirds dataset, the cluster-based predictions captured the actual spurious attributes better than the classifier layer predictions (Fig.~\ref{fig:erm_prediction_comparison}). We defer additional discussion to \citet{NEURIPS2020_e0688d13} and leave further analysis to future work.

\begin{figure}[h]
  \vspace{-0.25cm}
  \centering
  \includegraphics[width=1\textwidth]{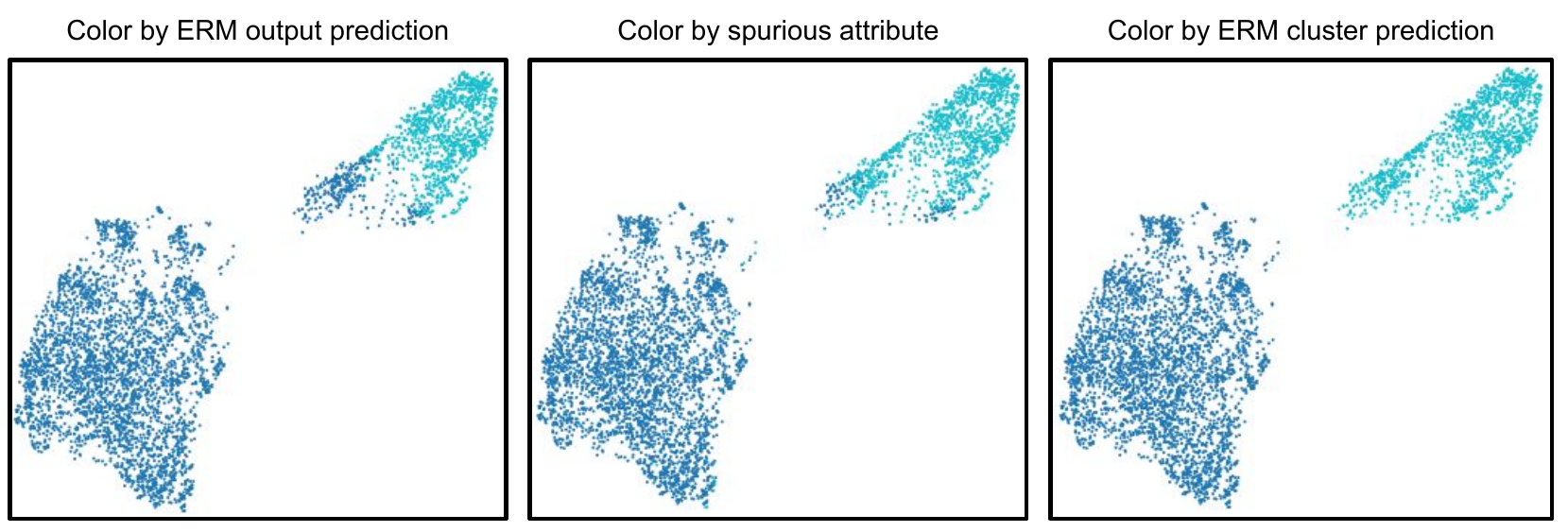}
  \caption{\small{UMAP visualization of ERM data representations for the Waterbirds training data. We visualize the last hidden layer outputs for a trained ERM ResNet-50 model given training samples from Waterbirds, coloring by either the ERM model's ``standard'' predictions, the actual spurious attribute values (included here just for analysis), and predictions computed by clustering the representations as described above. Clustering-based predictions more closely align with the actual spurious attributes than the ERM model outputs.}}
  \label{fig:erm_prediction_comparison}
\end{figure}

\textbf{Number of positives and negatives per batch.} One additional difference between our work and prior contrastive learning methods \citep{chen2020simple, NEURIPS2020_d89a66c7} is that we specifically construct our contrastive batches by sampling anchors, positives, and negatives first. This is different from the standard procedure of randomly dividing the training data into batches first, and then assigning the anchor, positive, and negative roles to each datapoint in a given batch. As a result, we introduce the number of positives $M$ and the number of negatives $N$ as two hyperparameters that primarily influence the size of each contrastive batch (with number of additional anchors and negatives also following $M$ and $N$ with two-sided batches). To maximize the number of positive and negative comparisons, as a default we set $M$ and $N$ to be the maximum number of positives and negatives that fit the sampling criteria specified under Algorithm~\ref{algo:two_sided_contrastive_batch} that also can fit in memory. In Appendix~\ref{appendix:experimental_details_methods_cnc_details}, for each dataset we detail the ERM prediction method and number of positives and negatives sampled in each batch.

\subsubsection{Stage 2 contrastive model training details} \label{appendix:experimental_details_methods_cnc_details}

In this section we describe the model architectures and training hyperparameters used for training the second model of our procedure, corresponding the reported worst-group and average test set results in Table~\ref{tab:main_results}. In this second stage, we train a new model with the same architecture as the initial ERM model, but now with a contrastive loss and batches sampled based on the initial ERM model's predictions. We report test set worst-group and average accuracies from models selected with hyperparameter tuning and early stopping based on the highest validation set worst-group accuracy. For all datasets, we sample contrastive batches using the clustering-based predictions of the initial ERM model. Each batch size specified here is also a direct function of the number of positives and negatives: $2M + 2N$.

\textbf{Colored MNIST.} We train a LeNet-5 CNN. For \name{}, we use $M = 32$, $N = 32$, batch size $128$, temperature $\tau = 0.05$, contrastive weight $\lambda = 0.75$, SGD optimizer, learning rate 1e-3, momentum 0.9, and weight decay 1e-4. We train for $3$ epochs, and use gradient accumulation to update model parameters every $32$ batches. 

\textbf{Waterbirds.} We train a ResNet-50 CNN with pretrained ImageNet weights. For \name{}, we use $M = 17$, $N = 17$, batch size $68$, temperature $\tau = 0.1$, contrastive weight $\lambda = 0.75$, SGD optimizer, learning rate 1e-4, momentum 0.9, weight decay 1e-3. We train for $5$ epochs, and use gradient accumulation to update model parameters every $32$ batches. 

\textbf{CelebA.} We train a ResNet-50 CNN with pretrained ImageNet weights. For \name{}, we use $M = 64$, $N = 64$, batch size $256$, temperature $\tau = 0.05$, contrastive weight $\lambda = 0.75$, SGD optimizer, learning rate 1e-5, momentum 0.9, and weight decay 1e-1. We train for $15$ epochs, and use gradient accumulation to update model parameters every $32$ batches. 

\textbf{CivilComments-WILDS.} We train a BERT model with pretrained weights and max number of tokens $300$. For \name{}, we use $M = 16$, $N = 16$, batch size $64$, temperature $\tau = 0.1$, contrastive weight $\lambda = 0.75$, AdamW optimizer, learning rate 1e-4, weight decay 1e-2, and clipped gradient norms. We train for $10$ epochs, and use gradient accumulation to update weights every $128$ batches.  

\subsubsection{Comparison method training details}
As reported in the main results (Table~\ref{tab:main_results}) we compare \name{} with the ERM and Group DRO baselines, as well as robust training methods that do not require spurious attribute labels for the training data: \textsc{George} \citep{NEURIPS2020_64986d86}, Learning from Failure (LfF) \citep{NEURIPS2020_64986d86}, Predictive Group Invariance (PGI) \citep{Ahmed2021SystematicGW}, Contrastive Input Morphing (CIM) \citep{taghanaki2021robust}, Environment Inference for Invariant Learning (EIIL) \citep{creager2021environment}, and Just Train Twice (\textsc{Jtt}) \citep{liu2021just}. For each dataset, we use the same model architecture for all methods. For the Waterbirds, CelebA, and CivilComments-WILDS datasets, we report the worst-group and average accuracies reported in \citet{liu2021just} for ERM, LfF, and \textsc{Jtt}. For \textsc{George}, we report the accuracies reported in \citet{NEURIPS2020_e0688d13}. For CIM, we report results from Waterbirds and CelebA from \cite{taghanaki2021robust} using the CIM + variational information bottleneck implementation \cite{alemi2016deep}, which achieves the best worst-group performance in their results. For EIIL, we report results from Waterbirds and CivilComments-WILDS from \cite{creager2021environment}. For these hyperparameters, we defer to the original papers. For GDRO, we reproduce the results with the same optimal hyperparameters over three seeds. For PGI, we used our own implementation for all results, with details specified below. For Colored MNIST, we run implementations for \textsc{George}, CIM, EIIL, LfF, \textsc{Jtt}, and GDRO using code from authors. We include training details for our own implementations below:

\textbf{Colored MNIST (\cmnist).}
We run all methods for 20 epochs, reporting test set accuracies with early stopping. For \textsc{Jtt}, we train with SGD, learning rate 1e-3, momentum 0.9, weight decay 5e-4, batch size 32. We use the same initial ERM model as \name{}, with hyperparameters described in Appendix~\ref{appendix:experimental_details_methods_erm_details}. For upsampling we first tried constant factors $\{10, 100, 1000\}$. We also tried a resampling strategy where for all the datapoints with the same initial ERM model prediction, we upsample the incorrect points such that they equal the correct points in frequency, and found this worked the best. With $p_\text{corr} = 0.995$, this upsamples each incorrect point by roughly $1100$. We also use this approach for the results in Fig.~\ref{fig:representation_effect}. For GDRO we use the same training hyperparameters as \textsc{Jtt}, but without the upsampling and instead set group adjustment parameter $C = 0$. For LfF, we use the same hyperparameters as \textsc{Jtt}, but instead of upsampling gridsearched the $q$ parameter $\in \{0.1, 0.3, 0.5, 0.7, 0.9\}$, using $q = 0.7$. For \textsc{George} we train with SGD, learning rate 1e-3, momentum 0.9, weight decay 5e-4. For CIM, we use the CIM + VIB implementation. We train with SGD, learning rate 1e-3, weight decay 5e-4, $\beta$ parameter $10$, and $\lambda$ parameter 1e-5. For EIIL, for environment inference we use the same initial ERM model as \name{} and \textsc{Jtt}, and update the soft environment assignment distribution with Adam optimizer, learning rate 1e-3, and $10000$ steps. Following \cite{creager2021environment}'s own colored MNIST experiment, we train the second model with IRM, using learning rate 1e-2, weight decay 1e-3, penalty weight $100$, and penalty annealing parameter $80$.

\textbf{CelebA.}
We also tune EIIL for CelebA. We again use the same initial ERM model as \name{}, and update the soft environment assignment distribution with Adam optimizer, learning rate 1e-3, and $10000$ steps. We train the second model with GDRO, using SGD, $50$ epochs, learning rate 1e-5, batch size 128, weight decay $0.1$, and group adjustment parameter $3$. 

\textbf{PGI.} To compare against PGI, we tried two implementations. First, we followed the PGI algorithm to first infer environments via the same mechanism as in EIIL [2], and trained a second model with the PGI objective using standard shuffled minibatches (aiming to minimize the KL divergence for samples with the same class but different inferred environment labels per batch). However, despite ample hyperparameter tuning (trying loss weighting component $\lambda \in \{0.1, 0.5, 10,  100\}$, we could not get PGI to work well (on Waterbirds, we obtained $51.0 \pm 4.9\%$ worst-group accuracy and $79.6 \pm 2.6\%$ average accuracy). We hypothesize this is due to the strong spurious correlations in our datasets: \cite{Ahmed2021SystematicGW} only consider datasets where 20\% of the training samples do not exhibit a dominant correlation and fall under minority groups. Our evaluation benchmarks are more difficult due to stronger spurious correlations: in Waterbirds only $5\%$ of samples do not exhibit the dominant correlation, and in CelebA only $7\%$ of training samples lie in the smallest group.

We then tried a more balanced batch variation. Instead of using randomly shuffled minibatches, we use PGI environment inference labels to sample batches similarly to how \name{} uses the stage 1 ERM model predictions to sample batches. We construct batches by specifying the same number of “anchors”, “positives”, and “negatives” as in \name{}, and sample batches where anchors and positives are samples with the same class, but different inferred environments. Anchors and negatives are samples in the same inferred environment, but with different classes. We then train a second model with the PGI criterion using these modified batches.

In Section~\ref{appendix:hparam_sweep}, we include our sweeps for both the method-specific and general hyperparameters.

\textbf{Comparison limitations.}
One limitation of our comparison is that because for each dataset we sample new contrastive batches which could repeat certain datapoints, the number of total batches per epoch changes. For example, $50$ epochs training the second model in \name{} does not necessarily lead to the same total number of training batches as $50$ epochs training with ERM, even if they use the same batch size. However, we note that the numbers we compare against from \citet{liu2021just} are reported with early stopping. In this sense we are comparing the best possible worst-group accuracies obtained by the methods, not the highest worst-group accuracy achieved within a limited number of training batches. We also found that although in general the time to complete one epoch takes much longer with \name{}, \name{} requires fewer overall training epochs for all but the CivilComments-WILDS dataset to obtain the highest reported accuracy.

\subsubsection{Hyperparameter sweeps}\label{appendix:hparam_sweep}
To fairly compare with previous methods \citep{liu2021just, creager2021environment, taghanaki2021robust}, we use the same evaluation scheme (selecting models based on worst-group validation error), and sweep over a consistent number of hyperparameters, i.e. number of validation set queries. We set this number for \name{} to be a comparable number of queries that is reported in prior works. We break this down into method-specific (e.g. contrastive temperature in \name{}, upweighting factor in \textsc{Jtt}), and shared (e.g. learning rate) hyperparameter categories.

\textbf{Method-specific.}
For \name{}, we tune three method-specific hyperparameters: contrastive loss temperature (Eq.~\ref{eq:supcon_easy_anchor}), contrastive weight (Eq.~\ref{eq:full_loss}), and gradient accumulation steps values as in Table~\ref{tab:cnc_hparam_sweep}.
\begin{table}[H]
\centering
\caption{Method-specific hyperparameters for \name{}.}
\vspace{0.25cm}
\begin{tabular}{@{}ccc@{}}
\toprule
Hyperparameter                               & Dataset                    & Values            \\ \midrule
Temperature ($\tau$)                         & All                        & $\{0.05, 0.1\}$   \\ \midrule
Contrastive Weight ($\lambda$)               & All                        & $\{0.5, 0.75\}$   \\ \midrule
\multirow{2}{*}{Gradient Accumulation Steps} & CMNIST$^*$, Waterbirds, CelebA & $\{32, 64\}$ \\
                                             & CivilComments-WILDS        & $\{32, 64, 128\}$      \\ \bottomrule
\end{tabular}
\vspace{-0.25cm}
\label{tab:cnc_hparam_sweep}
\end{table}
For \textsc{Jtt}, the reported results and our CMNIST$^*$ implementation are tuned over the following hyperparameters in Table~\ref{tab:jtt_hparam_sweep}.
\begin{table}[H]
\centering
\vspace{-0.5cm}
\caption{Method-specific hyperparameters for \textsc{Jtt}.}
\vspace{0.25cm}
\begin{tabular}{@{}ccc@{}}
\toprule
Hyperparameter                           & Dataset                             & Values                                        \\ \midrule
\multirow{2}{*}{Stage 1 Training Epochs} & Waterbirds                          & $\{40, 50, 60\}$                              \\
                                         & CMNIST$^*$, CelebA, CivilComments-WILDS & $\{1, 2\}$                                    \\ \midrule
\multirow{3}{*}{Upweighting Factor}      & CMNIST$^*$                              & \multicolumn{1}{l}{$\{10, 100, 1000, 1100\}$} \\
                                         & Waterbirds, CelebA                  & $\{20, 50, 100\}$                             \\
                                         & CivilComments-WILDS                 & $\{4, 5, 6\}$                                 \\ \bottomrule
\end{tabular}
\label{tab:jtt_hparam_sweep}
\end{table}

For EIIL, our CMNIST$^*$ and CelebA implementations are tuned over hyperparameters reported in Table~\ref{tab:eiil_hparam_sweep}. \cite{creager2021environment} report that they allow up to 20 evaluations with different hyperparameters for Waterbirds and CivilComments-WILDS. When using GDRO as the second stage model, they also report using the same hyperparameters as the GDRO baseline for Waterbirds. We do the same for our evaluation on CelebA. This amounts to primarily tuning the first stage environment inference learning rate and number of updating steps for CMNIST$^*$ and CelebA, and the penalty annealing iterations and penalty weight for the IRM second stage model for CMNIST$^*$.

\begin{table}[H]
\centering
\caption{Method-specific hyperparameters for EIIL.}
\vspace{0.25cm}
\begin{tabular}{@{}ccc@{}}
\toprule
Hyperparameter                   & Dataset & Values                 \\ \midrule
Environment Inference Learning Rate                 & CMNIST$^*$, CelebA     & \{1e-1, 1e-2, 1e-3\}   \\ \midrule
Environment Inference Update Steps                  & CMNIST$^*$, CelebA     & \{10000, 20000\}       \\ \midrule
IRM Penalty Weight               & CMNIST$^*$  & \{0.1, 10, 1000, 1e5\} \\ \midrule
IRM Penalty Annealing Iterations & CMNIST$^*$  & \{10, 50, 80\}         \\ \midrule
GDRO Group Adjustment            & CelebA  & \{0, 2, 3\}            \\ \bottomrule
\end{tabular}
\label{tab:eiil_hparam_sweep}
\end{table}

For CIM, our CMNIST$^*$ implementation uses CIM + VIB, and is tuned over the $\beta$ VIB parameter \cite{alemi2016deep} and contrastive weighting parameter $\lambda$ for CIM in Table~\ref{tab:cim_hparam_sweep}. \cite{taghanaki2021robust} report tuning over a range of values within [1e-5, 1] for $\lambda$ on the CelebA and Waterbirds datasets.

\begin{table}[H]
\centering
\caption{Method-specific hyperparameters for CIM.}
\vspace{0.25cm}
\begin{tabular}{@{}ccc@{}}
\toprule
Hyperparameter & Dataset    & Values                   \\ \midrule
CIM $\lambda$  & CMNIST$^*$ & \{0.01, 0.05, 0.1\}      \\ \midrule
VIB $\beta$    & CMNIST$^*$ & \{1e-5, 1e-3, 1e-1, 10\} \\ \bottomrule
\end{tabular}
\label{tab:cim_hparam_sweep}
\end{table}

For PGI, we tune $\lambda$ with the same environment inference parameters as used in EIIL (Table~\ref{tab:eiil_hparam_sweep}). Fixing these parameters to infer environments, we tuned the $\lambda$ component for training the robust model across $\lambda \in \{0.1, 0.5, 10, 100\}$.

\textbf{Shared.} For all datasets, we use the same optimizer and momentum (if applicable) as reported in the \textsc{Jtt} paper. Table~\ref{tab:shared_hparam_sweep} contains the data-specific shared hyperparameter values tried.

\begin{table}[H]
\centering
\vspace{-0.5cm}
\caption{Shared hyperparameters}
\vspace{0.25cm}
\begin{tabular}{@{}ccccc@{}}
\toprule
              & CMNIST$^*$         & Waterbirds & CelebA & CivilComments-WILDS \\ \midrule
Learning Rate & \{1e-4, 1e-3, 1e-2\} &   \{1e-4, 1e-3\}         &  \{1e-5, 1e-4\}       &     \{1e-5, 1e-4\}                \\ \midrule
Weight Decay  & \{1e-4, 5e-4\}  &    \{1e-4, 1e-3\}        &  \{1e-2, 1e-1\}      &     \{1e-2\}                \\ \bottomrule
\end{tabular}
\label{tab:shared_hparam_sweep}
\end{table}

\subsection{\name{} compute resources and training time}
All experiments for CMNIST$^*$, Waterbirds, and CelebA were run on a machine with 14 CPU cores and a single NVIDIA Tesla P100 GPU. Experiments for CivilComments-WILDS were run on an Amazon EC2 instance with eight CPUs and one NVIDIA Tesla V100 GPU. 

Regarding runtime, one limitation with the current implementation of \textsc{CnC} is its comparatively longer training time compared to methods such as standard ERM. This is both a result of training an initial ERM model in the first stage, and training another model with contrastive learning in the second stage. In Table~\ref{tab:training time} we report both how long it takes to train the initial ERM model and long it takes to complete one contrastive training epoch on each dataset. We observe that while in some cases training the initial ERM model is negligible, especially if we employ training with only a few epochs to prevent memorization (for Colored MNIST it takes roughly two minutes to obtain a sufficient initial ERM model), it takes roughly 1.5 and 3 hours to train the high regularization initial models used for Waterbirds and CelebA. While these hurdles are shared by all methods that train an initial ERM model, we find that the second stage of \name{} occupies the bulk of training time. Prior work has shown that contrastive learning requires longer training times and converges more slowly than supervised learning \citep{chen2020simple}. We also observe this in our work. 

We note however that because we sample batches based on the ERM model's predictions, the contrastive training duration is limited by how many datapoints the initial ERM model predicts incorrectly. In moderately sized datasets with very few datapoints in minority groups, (e.g. Waterbirds, which has roughly 4794 training points and only 56 datapoints in its smallest group), the total time it takes to train \name{} is on par with ERM. Additionally, other methods such as additional hard negative mining \citep{Robinson2020ContrastiveLW} have been shown to improve the efficiency of contrastive learning, and we can incorporate these components to speed up training time as well.

\begin{table}[H]
\centering
\caption{\name{} Average total training time for first and second stages of \textsc{Cnc}}
\vspace{0.25cm}
\label{tab:training time}
\begin{tabular}{@{}lcccc@{}}
\toprule
Dataset                             & CMNIST$^*$ & Waterbirds & CelebA   & CivilComments-WILDS \\ \midrule
Stage 1 ERM train time              & 2 min.         & 1.5 hrs    & 3 hrs    & 3.1 hrs               \\
Stage 2 \name{} train time & 1.2 hrs        & 1.8 hrs    & 32.2 hrs &  37.6 hrs                   \\ \bottomrule
\end{tabular}
\end{table}

\newpage
\section{Visualization of learned data representations}
As in Fig.~\ref{fig:representation_visuals}, we visualize and compare the learned representations of test set samples from models trained with ERM, \textsc{Jtt}, and \name{} in Fig.~\ref{fig:representation_visuals_cmnist_wbirds_celeba}. Compared to ERM models, both \textsc{Jtt} and \name{} models learn representations that better depict dependencies on the class labels. However, especially with the Waterbirds and CelebA datasets, \name{} model representations more clearly depict dependencies only on the class label, as opposed to \textsc{Jtt} models which also show some organization by the spurious attribute still.

\begin{figure}[h]
  \centering
  \includegraphics[width=1\textwidth]{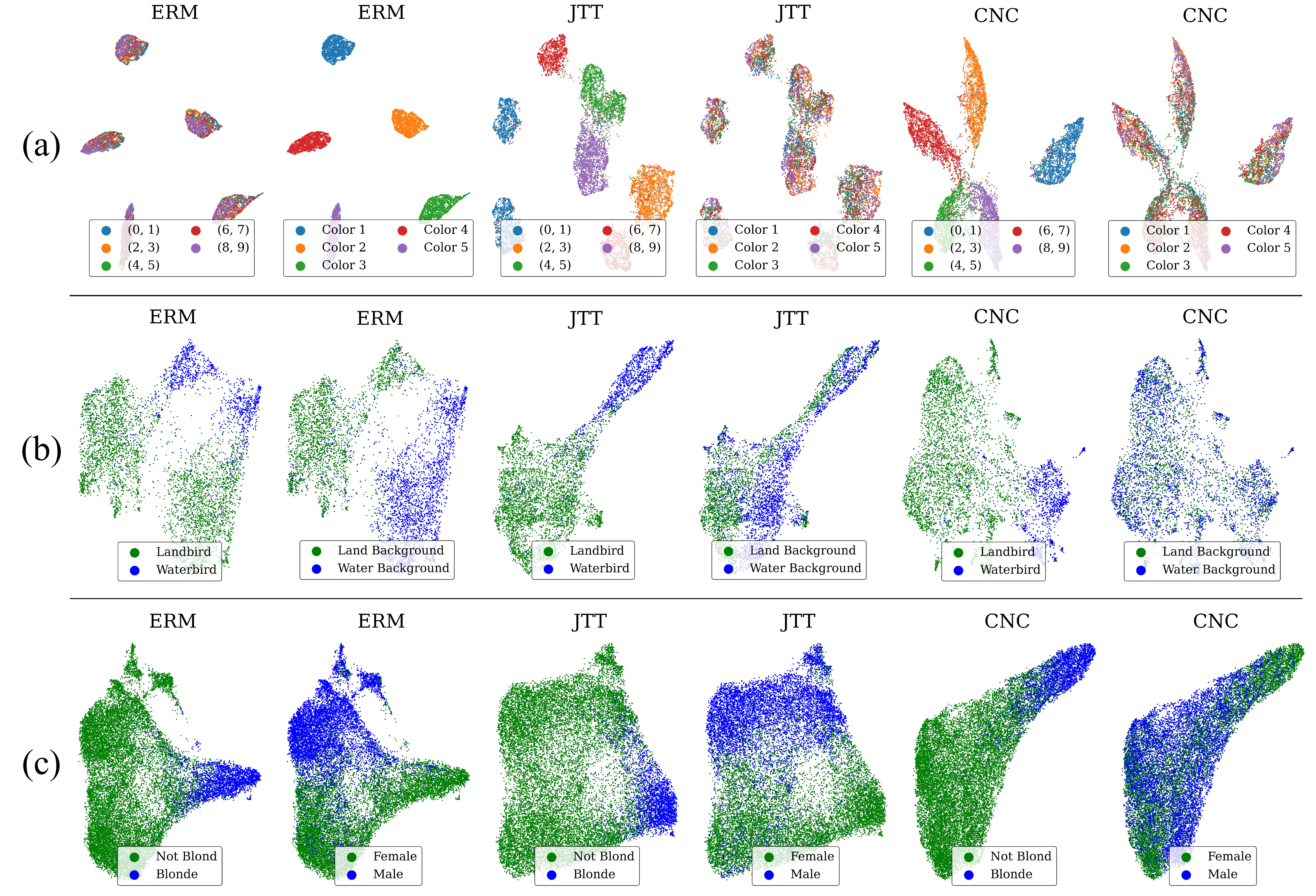}
  \caption{\small{UMAP visualizations of learned representations for Colored MNIST (a), Waterbirds (b), and CelebA (c). We color data points based on the class label (left) and spurious attribute (right). Most consistently across datasets, \name{} representations exhibit dependence and separability by the class label but not the spurious attribute, suggesting that they best learn features which only help classify class labels.}}
  \label{fig:representation_visuals_cmnist_wbirds_celeba}
\end{figure}

\section{Additional GradCAM visualizations}\label{appendix:gradcams}
On the next two pages, we include additional GradCAM visualizations depicting saliency maps for samples from each group in the Waterbirds and CelebA datasets. Warmer colors denote higher saliency, suggesting that the model considered these pixels more important in making the final classification as measured by gradient activations. For both datasets, we compare maps from models trained with ERM, the next most competitive method for worst-group accuracy \textsc{Jtt}, and \name{}. \name{} models most consistently measure highest saliency with pixels directly associated with class labels and not spurious attributes.

\newpage

\begin{figure}[h]
  \centering
  \includegraphics[width=0.9\textwidth]{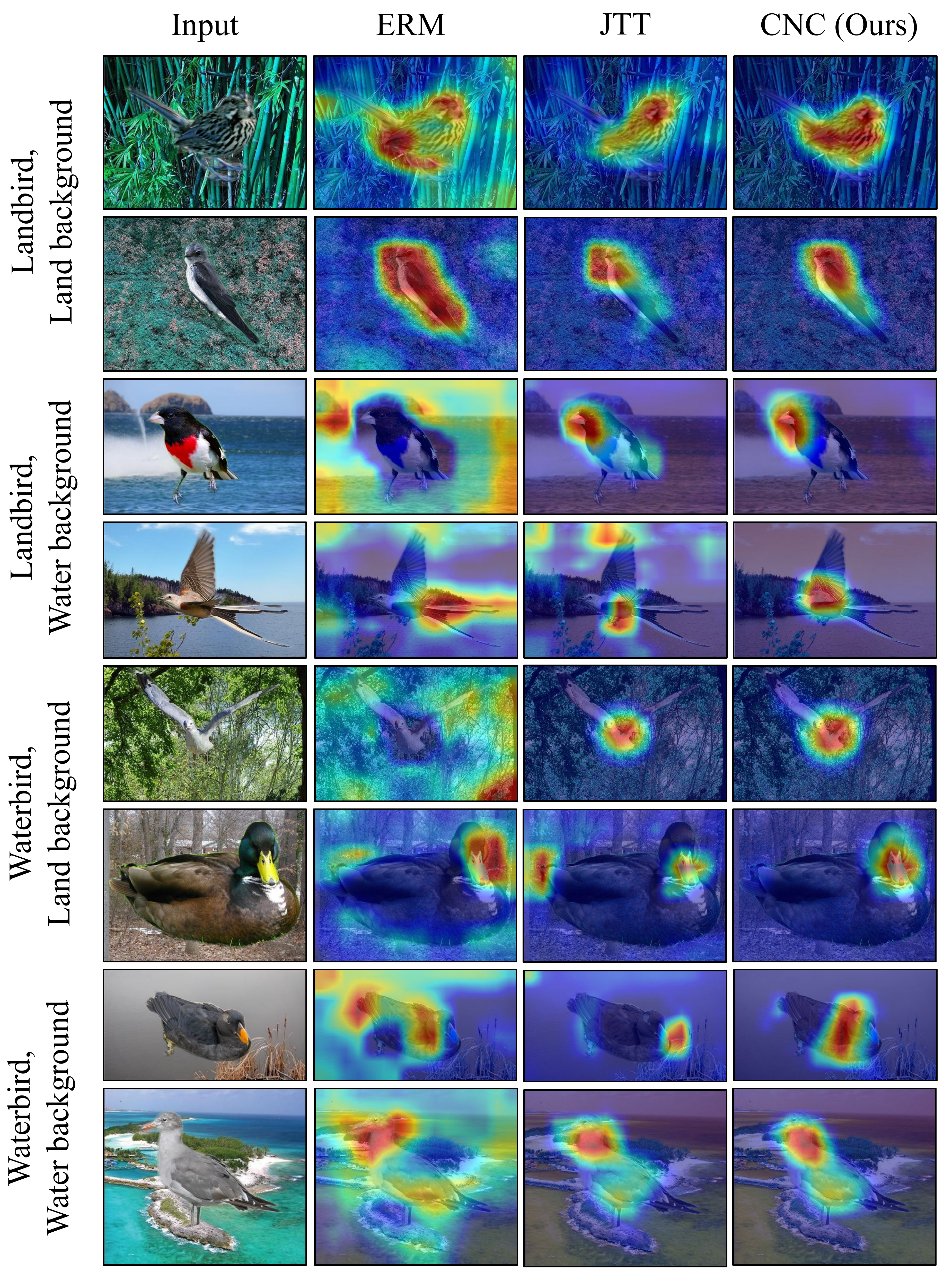}
  \caption{\small{Additional GradCAM visualizations for the Waterbirds dataset. We use GradCAM to visualize the ``salient'' observed features used to classify images by bird type for models trained with ERM, \textsc{Jtt}, and \textsc{CnC}. ERM models output higher salience for spurious background attribute pixels, sometimes almost exclusively. \textsc{Jtt} and \textsc{CnC} models correct for this, with \textsc{CnC} better exclusively focusing on bird pixels. }}
  \label{fig:more_waterbirds_gradcams}
\end{figure}

\newpage

\begin{figure}[ht]
\begin{adjustbox}{addcode={\begin{minipage}{\width}}{\caption{
      Additional GradCAM visualizations for the CelebA dataset. For models trained with ERM, \textsc{Jtt}, and \textsc{CnC}, we use GradCAM to visualize the ``salient'' observed features used to classify whether a celebrity has blond(e) hair. ERM models interesting also ``ignore'' the actual hair pixels in favor of other pixels, presumably associated with the spurious gender attribute. In contrast, GradCAMs for \textsc{Jtt} and \textsc{CnC} models usually depict higher salience for regions that at least include hair pixels. \textsc{CnC} models most consistently do so. 
      }\end{minipage}},rotate=90,center}
\includegraphics[width=1.4\textwidth]{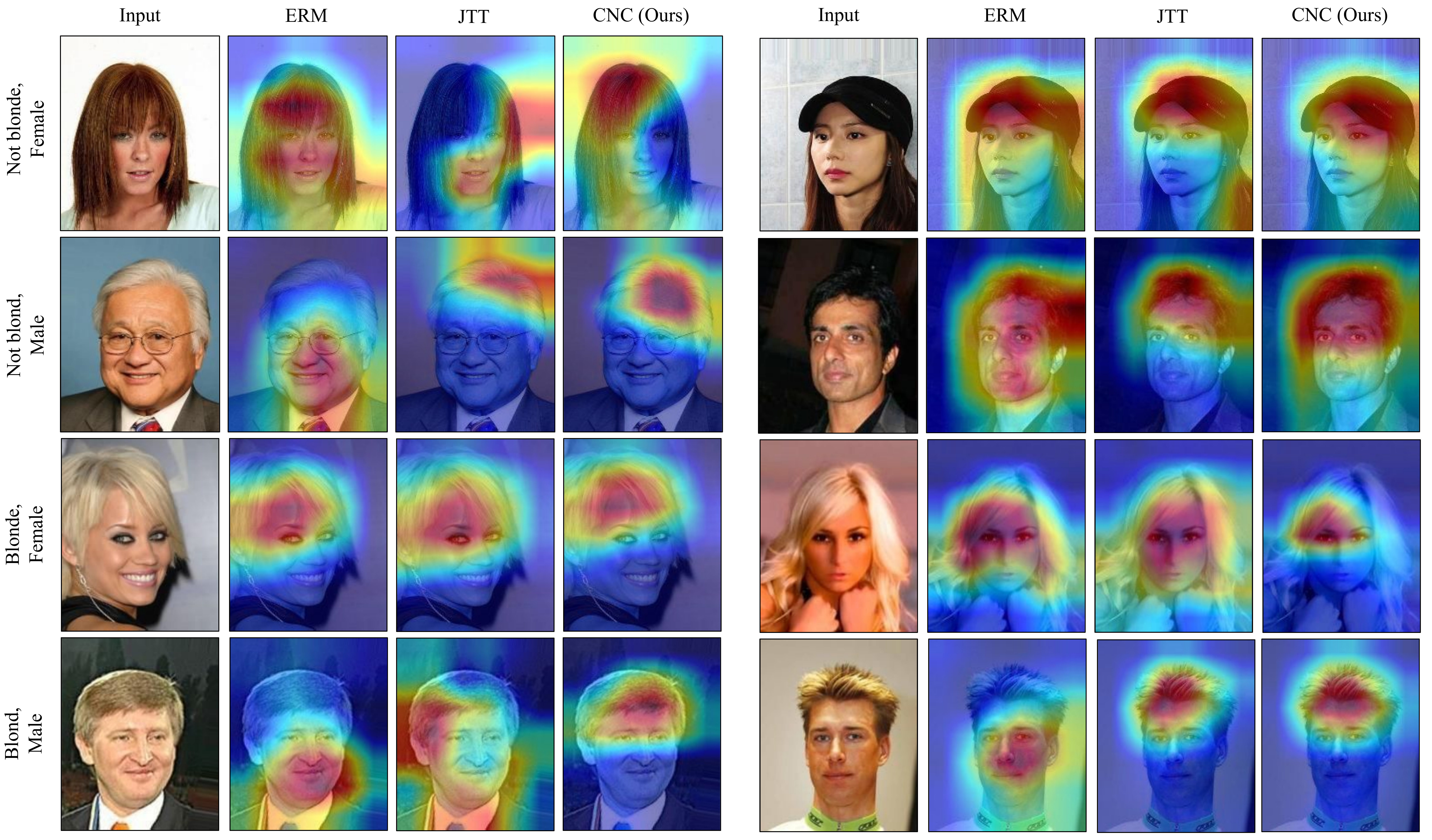}
  \end{adjustbox}
\end{figure}

\end{document}